\documentclass{article}

\usepackage{microtype}
\usepackage{graphicx}
\usepackage{booktabs}
\usepackage{breqn}
\usepackage{stmaryrd}
\usepackage{comment}

\usepackage{hyperref}

\usepackage{booktabs}
\usepackage{multirow}
\usepackage{caption}
\captionsetup[table]{skip=10pt}
\def\thick{0.8}

\hypersetup{ 
     colorlinks=true, 
     linkcolor=blue, 
     citecolor = blue,
     }

\usepackage{makecell}

\usepackage{amsmath,amsfonts,bm}

\usepackage{fullpage}
\usepackage{natbib}

\usepackage[utf8]{inputenc} 
\usepackage[T1]{fontenc}    
\usepackage{url}            
\usepackage{booktabs}       
\usepackage{amsfonts}       
\usepackage{nicefrac}       
\usepackage{microtype}      
\usepackage[dvipsnames]{xcolor}         

\usepackage{amsmath}
\usepackage{amssymb}
\usepackage{mathtools}
\usepackage{amsthm}
\usepackage{algorithm}
\usepackage{algpseudocode}
\usepackage{pseudo}
\usepackage{comment}
\usepackage{subcaption}
\usepackage{multirow}

\theoremstyle{plain}
\newtheorem{theorem}{Theorem}[section]

\theoremstyle{definition}

\theoremstyle{remark}

\usepackage[textsize=tiny]{todonotes}

\definecolor{winter}{rgb}{0.85,0.08,0.2}
\definecolor{summer}{rgb}{0.95,0.53,0.18}         
\definecolor{spring}{rgb}{0.02,0.93,0.68}
\definecolor{autumn}{rgb}{0.02,0.68,0.9}
\definecolor{rightblue}{RGB}{76,114,176} 
\definecolor{green(munsell)}{rgb}{0.0, 0.66, 0.47} 

\title{Challenges in Non-Polymeric Crystal Structure Prediction: Why a Geometric, Permutation-Invariant Loss is Needed}


\author{%
  Emmanuel Jehanno
  \\
  Inria\footnote{
  Univ. Grenoble Alpes, Inria, CNRS, Grenoble INP, LJK, 38000 Grenoble, France.}\\
  \texttt{emmanuel.jehanno@inria.fr} \\
   \and
   Romain Menegaux \\
   Inria\footnotemark[1] \\
   \texttt{romain.menegaux@gmail.com} \\
   \and
   Julien Mairal \\
   Inria\footnotemark[1] \\
   \texttt{julien.mairal@inria.fr} \\
   \and
   Sergei Grudinin \\
   LJK\footnote{Univ. Grenoble Alpes, CNRS, Grenoble INP, LJK, 38000 Grenoble, France.} \\
   \texttt{sergei.grudinin@univ-grenoble-alpes.fr} \\
}

\date{}

\begin{document}

\maketitle

\begin{abstract}
\noindent
Crystalline structure prediction is an essential prerequisite for designing materials with targeted properties. Yet, it is still an open challenge in materials design and drug discovery. Despite recent advances in computational materials science, accurately predicting three-dimensional non-polymeric crystal structures remains elusive. In this work, we focus on the molecular assembly problem, where a set $\mathcal{S}$ of identical rigid molecules is packed to form a crystalline structure. Such a simplified formulation provides a useful approximation to the actual problem. However, while recent state-of-the-art methods have increasingly adopted sophisticated techniques, the underlying learning objective remains ill-posed. We propose a better formulation that introduces a loss function capturing key geometric molecular properties while ensuring permutation invariance over $\mathcal{S}$. Remarkably, we demonstrate that within this framework, a simple regression model already outperforms prior approaches, including flow matching techniques, on the COD-Cluster17 benchmark, a curated non-polymeric subset of the Crystallography Open Database (COD).
\noindent
\textit{Our code is available at \href{https://github.com/EmmanuelJhno/SinkFast}{https://github.com/EmmanuelJhno/SinkFast}}

\end{abstract}

\section{Introduction}
\label{intro}

Generative modeling and deep learning have enabled rapid progress in the understanding and design of materials, molecules, and drugs. 
On the one hand, for \emph{material property prediction}, advances in graph neural networks and transformers have significantly improved the understanding of molecular structures \citep{joshi2024expressivepowergeometricgraph, lin2023efficientapproximationscompleteinteratomic, Choudhary_2021}, linking their three-dimensional (3D) geometry to physical and chemical properties. Particular attention has been paid to SE(3)-equivariant representations, which present higher expressivity by preserving geometric symmetries \citep{schutt2021painn}. These methods have been adapted to crystalline structures, with their inherent challenges of infinite periodicity and rich symmetry patterns \citep{yan2024completeefficientgraphtransformers}.
\cite{yan2022periodicgraphtransformerscrystal, yan2024completeefficientgraphtransformers, ito2025rethinking}
yield state-of-the-art performance in property prediction of crystalline structures thanks to physically grounded methods, reflecting the need to integrate physics knowledge in models.
On the other hand, for \emph{material design}, generative models such as diffusion models \citep{song2021denoising} and flow matching methods \citep{liu2023flow} have greatly enhanced the capacity to generate valid and diverse molecular and material structures \citep{watson2023rfdiffusion}.
This work aims to combine these two aspects for the task of \emph{molecular assembly prediction}, where a finite set of identical rigid molecules is packed into a crystalline structure.

A fundamental step in designing a material with specific properties is to know its crystallization pattern. 
As represented in Figure \ref{fig:material_periodicity},
a crystal is conventionally described by a unit cell,
the smallest volume that contains all the structural and symmetry information necessary to generate the whole crystal by translation.
This three-dimensional infinitely periodic shape largely determines the physical and chemical properties of the resulting material. 
This shape can be predicted either by regression \citep{Liang_2020, cao2024spacegroupinformedtransformer} or by flow matching/diffusion methods that allow for 
probabilistic answers
\citep{merchant2023scaling, xie2022crystaldiffusionvariationalautoencoder,
luo2025crystalflow,
pakornchote2023diffusionprobabilisticmodelsenhance, jiao2024crystalstructurepredictionjoint}.

Most of the previous methods model atoms in the unit cell 
individually. 
While such an approach works well \citep{miller2024flowmm} for simple crystals of atomic point clouds from the Materials Project \citep{jainmaterials}, the performance degrades on more complex molecular materials with symmetries other than translations.
These contain internal {\em point-group} symmetries within the unit cell.
An asymmetric unit (ASU) is defined as an elementary pattern of the unit cell, irreducible under the symmetry group transformations.
A unit cell can be composed of multiple ASUs and an example is shown in Figure \ref{fig:material_periodicity}\textit{-left}. 
As this basic structure maintains a fixed internal structure, generating the crystal by directly predicting the ASU position, orientation, and symmetry operations in the world frame
significantly reduces the dimensionality of the task, compared to moving each atom individually.
In this setting, the goal of the \emph{molecular assembly prediction} problem can be formulated as follows: given an elementary structure -- an ASU --, predict its local crystalline structure, or in other words, how it packs in space.

\vspace*{-0.1cm}
\paragraph{Related work}
Historically, the problem of computational material design has been extensively studied through the lens of Crystal Structure Prediction (CSP) challenge. This was first tackled through iterative process involving expensive spatial optimization \citep{Martinez2009PACKMOL} and energy assessment of predicted structures with first-principles calculations based on the density functional theory (DFT) \citep{Pickard_2011, kresse1996efficient}.
However, these methods are slow, scale poorly with the number of atoms in the unit cell and thus may not be adapted to infinite materials. 
More recently, generative models have emerged as promising candidates for this task, especially for simple inorganic crystals \citep{xie2022crystaldiffusionvariationalautoencoder, jiao2024crystalstructurepredictionjoint, levy2025symmcd, nam2025flowmatchingacceleratedsimulation}.
However, they have not yet been widely adapted for complex organic materials.
Nonetheless, very recent deep learning-based approaches study the molecular assembly prediction task
by atom-wise \citep{liu2024equivariant} and rigid-body \citep{guo2025assembleflow} flow matching. 
We detail the SE(3) flow matching setup in Appendix \ref{app:se3_flow_matching}.
Despite integrating sophisticated techniques, such as diffusion and flow matching, 
some essential building blocks for non-polymeric crystal structure prediction are still missing in these methods.

\textit{A thorough related works section is provided in Appendix \ref{app:related_works}.}

\vspace*{-0.1cm}
\paragraph{Contributions}
In this work we show
the need of integration of domain-specific physics knowledge in the training scheme of models and the challenges that constitute the task of material generation.
Our contributions can be summarised as follows:
\begin{enumerate}
    \item \textbf{Physics grounded loss.} We show that a domain-specific rigid-body, model agnostic loss, grounded in physical principles, leads to improved prediction of crystalline structures.
    \item \textbf{Permutation-invariant loss.} We propose an effective differentiable \emph{soft matching} objective that is invariant to global geometric transformations and to the order permutation of repeated molecular units.
    \item \textbf{Remaining challenges.} While the proposed domain-driven learning objective enables us to outperform prior approaches with a simple regression model, we also witness the challenges that remain to be tackled to reach real-world applicability.
\end{enumerate}

\begin{figure}[!htbp]
    \centering
    \begin{subfigure}{0.15\linewidth}
        \centering
        \includegraphics[width=.99 \linewidth]{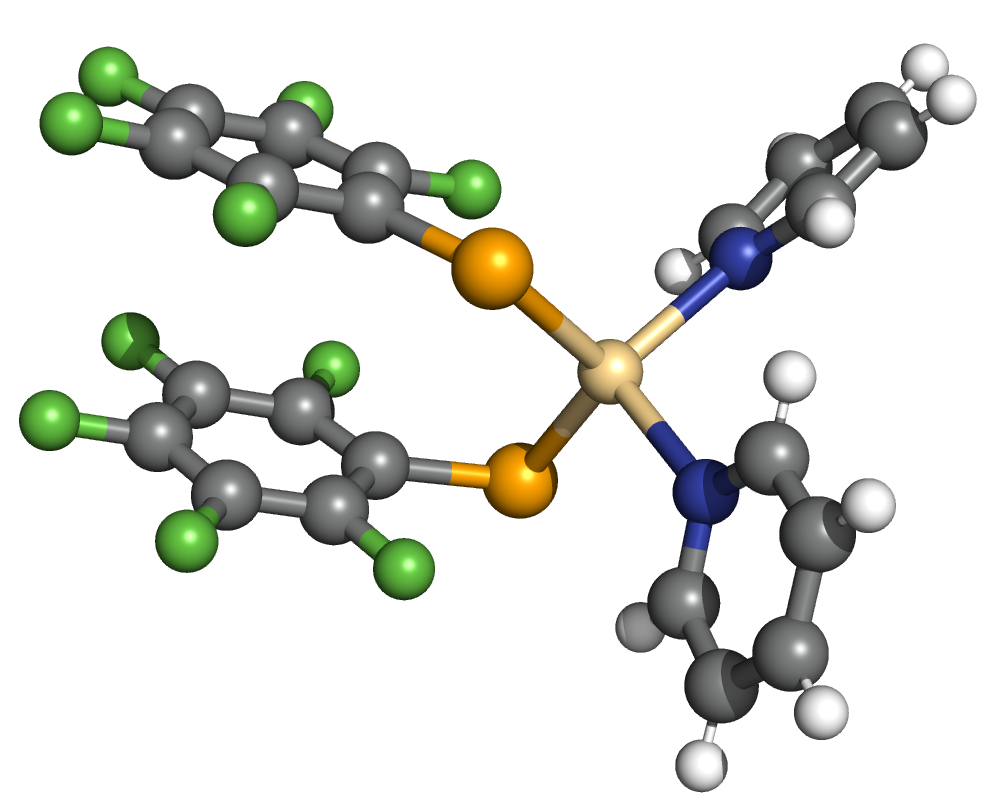}
    \end{subfigure}
    \hfill
    \begin{subfigure}{0.3\linewidth}
        \centering
        \includegraphics[width=.9 \linewidth]{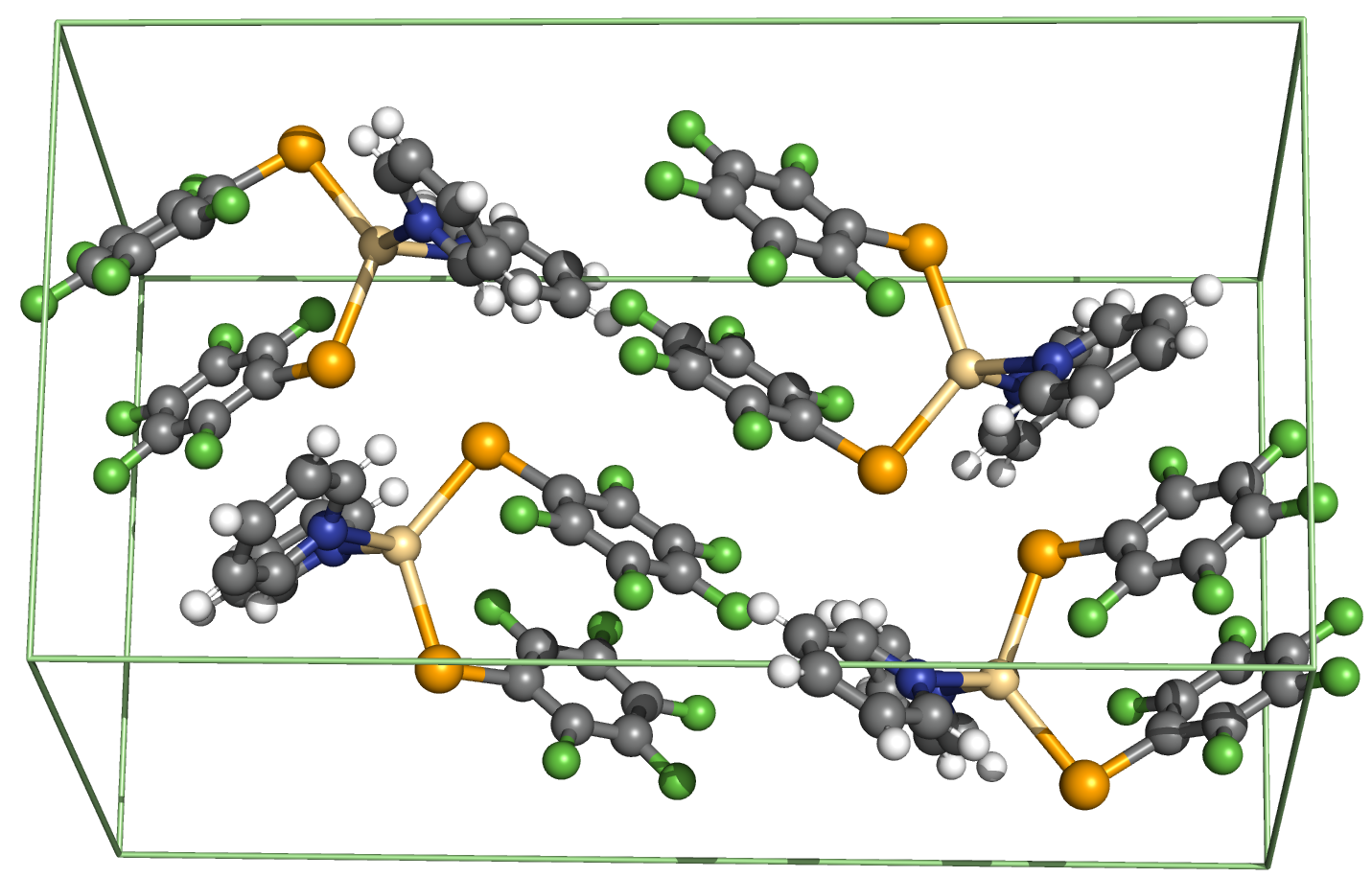}
    \end{subfigure}
    \hfill
    \begin{subfigure}{0.4\linewidth}
        \centering
        \includegraphics[width=.99 \linewidth]{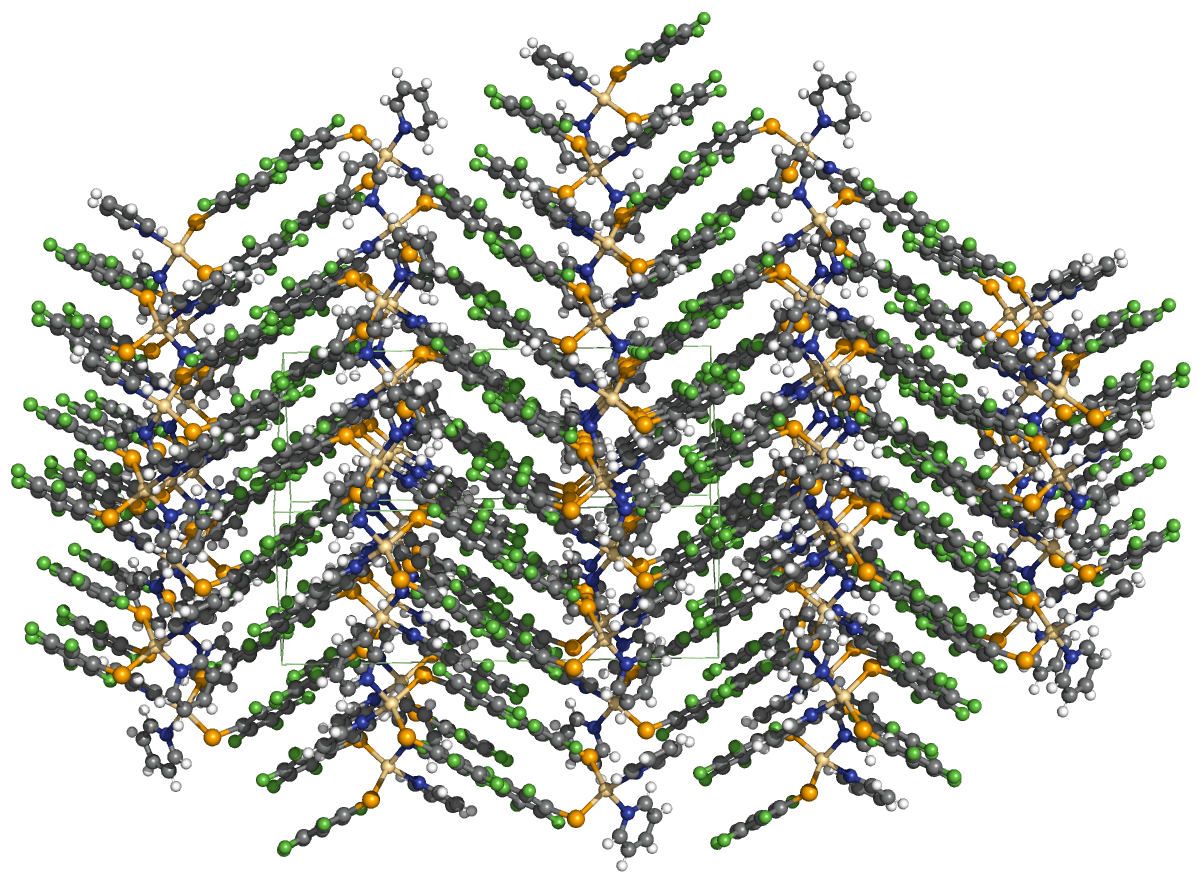}
    \end{subfigure}
    \caption{A crystalline material at three different scales. From left to right: (a) The asymmetric subunit (ASU). (b) The unit cell with mirror images of the ASU. (c) The unit cell is repeated periodically in all three directions. Illustrations correspond to the COD-4316210 crystal structure from Crystallographic Open Database \citep{Grazulis2009}.
   } \label{fig:material_periodicity}
\end{figure}

\section{Problem setting}

\paragraph{Problem formulation}
\label{sec:problem_form}

A non-polymeric crystal is a solid material in which molecules are arranged in a highly ordered pattern repeating in the three spatial dimensions (3D). 
The asymmetric units ASU that constitute it are molecules that are identical objects in 3D.
The unit cell is then defined by a finite number of symmetry operators applied to the ASU.
The pinnacle of crystalline structure prediction is to compute the infinite 3D structure of a material given its substituent chemical compounds.
To solve this very challenging task, one can 
take a number of approximations and hypotheses.
The molecular assembly subproblem is a simplification of the original problem, where a finite set $\mathcal{S}$ of identical rigid molecules is packed together from a state $\mathcal{S}_{\text{initial}}$ into a state $\mathcal{S}_{\text{final}}$ which forms a pattern that can be then replicated in space
into a crystal.
Our goal is thus to predict rigid spatial transformations $\mathcal{T}_i$ for each molecule $i$ that reconstruct the $\mathcal{S}_{\text{final}}$ set from the $\mathcal{S}_{\text{initial}}$ set.
We propose an efficient and model-agnostic way to guide any machine learning model with physical knowledge of the task.

\paragraph{Dataset}
\label{sec:dataset}
In this work we use the COD-Cluster17  assembly dataset introduced by \citet{liu2024equivariant}, specifically constructed for the task of non-polymeric crystal structure prediction.
To the best of our knowledge, this is currently the only available benchmark for this task.
This dataset contains 111k assemblies and is a simplified, sanitized version of the 507k crystals from the real world Crystallography Open Database (COD) \citep{Grazulis2009}.
The procedure to build the dataset is detailed in \citet{liu2024equivariant} and can be summarized as follows.
Firstly, crystals are extracted from the COD if: (1) their asymmetric unit contains only one molecule; (2) they do not present disordered atoms (cases where some atoms do not occupy unique and uniquely attributed positions); (3) they are non-polymeric. 
Then, the dataset is built by computing for each filtered crystal the ground-truth supercell of an arbitrary asymmetric unit--referred to as the {\em central molecule}--, which is the aggregation of 27 unit cells into a parallelepiped centered on the unit cell of the asymmetric unit of interest. An example of a supercell is given in Figure \ref{fig:material_periodicity}\textit{-right}.
The authors of COD-Cluster17 then extracted the central molecule's 
16 nearest neighbors using a cutoff in Euclidean space within this supercell.
This procedure outputs the {\em final positions} set consisting of each atom Cartesian coordinates.
Then, a  random rigid-body transformation is applied to the atomic positions of each molecule, which results 
in the {\em initial positions} set. 
The task for the COD-Cluster17 benchmark is then originally a
point cloud packing matching task
of predicting all atoms final absolute positions, provided the known correspondence with the initial positions. This task has also been formulated as a rigid-body packing matching in \citet{guo2025assembleflow} as molecular integrity is preserved in both $\mathcal{S}_{\text{initial}}$ and $\mathcal{S}_{\text{final}}$ sets.

However, the exact mapping enforcing specific index correspondences between the assembly atoms or molecules is very unrealistic as the mapping is arbitrary.
Indeed, all ASUs in a crystal are geometrically, physically and chemically equivalent. Thanks to the filtering procedure of the COD-Cluster17 dataset construction, as detailed above, the 17 molecules in $\mathcal{S}_{\text{initial}}$ and   $\mathcal{S}_{\text{final}}$ sets are thus also equivalent. Then, there is no specific reason why $i$ in $\mathcal{S}_{\text{initial}}$ must be associated with $i$ but not a different index $j$ in $\mathcal{S}_{\text{final}}$.

One of our contributions is thus to propose a more reasonable task of packing molecules without enforcing index correspondences, preserving the invariance to permutations of the set $\mathcal{S}_{\text{final}}$ of 17 molecules. 
It is important to note that despite the COD-Cluster17 benchmark is available in two distinct versions -- with and without molecular inversion -- \citet{guo2025assembleflow} focus exclusively on the version without inversions for simplicity.
While this assumption ensures that molecules are  identical under rigid transformations,
we argue in Appendix
\ref{app:inversion_dataset} that our method can also be efficiently adapted to the inversion dataset.

\paragraph{Rigid body description}
We aim to predict the positions of rigid molecules in 3D, which can not be described by single position vectors as for atoms. Instead, 
we represent the position of a rigid molecule as a rigid spatial transformation operator $\mathcal{T} = (\vec{r}, \mathbf{q}) $ composed of a 3D translation $\vec{r} \in \mathbb{R}^3$ and a 3D rigid rotation quaternion $\mathbf{q} = [s, \Vec{q}] \in \text{SO(3)}$, where $s$ is its scalar part and $\Vec{q}$ is its vector part. 
See Appendix \ref{app:rmsd} 
for details.

\section{Methods}

We will now refer to the global reference frame as an arbitrarily chosen coordinate system used to represent the positions and orientations of the $M$ molecules in the set $\mathcal{S}$. In contrast, we define local frames as those attached to the center of mass of each individual molecule, which rigidly move with them. These local frames are initialized using the principal components of each molecule's inertia tensor. While this initialization may not be unique, the specific choice does not affect the relative transformations between local frames, which are the quantities actually used in the model computations.

\subsection{Metrics}
\paragraph{Packing matching}
Current crystal structure prediction  methods \citep{guo2025assembleflow, liu2024equivariant} typically use 
the \textit{Packing Matching} (PM) score as an assessment metric.
It is defined for atoms positions $\vec{x}$ as follows,
\begin{equation}
    \text{PM}_{\text{atom}}^2 = \frac{1}{N^2} \sum_{i=1}^{N} \sum_{j=1}^{N} \left(\lVert\vec{x}_{i}^{\text{~pred}} - \vec{x}_j^{\text{~pred}}\rVert - \rVert\vec{x}_{i}^{\text{~gt}} - \vec{x}_j^{\text{~gt}}\lVert\right)^2,
    \label{eq:pm}
\end{equation}
where $N$ is the number of atoms in the assembly, $\vec{x}_{i}^{\text{pred}}$ (resp. $\vec{x}_{j}^{\text{gt}}$) is the position vector of atom $i$ (resp. $j$) in the predicted (resp. ground-truth) assembly. PM quantifies how well the pairwise distances between the atoms are predicted, and is invariant to global rotations and translations.
Also commonly used, the $\text{PM}_{\text{center}}$ metric, is defined as follows, 
\begin{equation*}
    \text{PM}_{\text{center}}^2 = \frac{1}{M^2} \sum_{i=1}^{M} \sum_{j=1}^{M} \left( ||\vec{c}_{i}^{\text{~pred}} - \vec{c}_{j}^{\text{~pred}}|| - ||\vec{c}_{i}^{\text{~gt}} - \vec{c}_{j}^{\text{~gt}}|| \right)^2,
\end{equation*}
where $M$ is the number of molecules in the assembly, $\vec{c}_{i}^{\text{~pred}}$ (resp. $\vec{c}_{j}^{\text{~gt}}$) is the position of $i$th molecule's (resp. $j$th) center of mass in the predicted (resp. ground-truth) assembly.
It evaluates the quality of the molecule positions regardless of their orientations.

\paragraph{Root mean square displacement} RMSD is another metric common in chemistry, structural biology, physics, and materials science.
RMSD performs direct comparisons of atom positions, which requires representing them in a common frame:
\begin{equation}
    \text{RMSD}_{\text{atom}}^2 = \frac{1}{N} \sum_{i \in N} \lVert\vec{x}_i^{\text{~pred}} - \vec{x}_i^{\text{~gt}}\rVert^2.
    \label{eq:RMSD}
\end{equation}
Note that Appendix \ref{app:metrics}
proves the relation $\text{PM}_{\text{atom}} \leq \sqrt{2}  \text{RMSD}_{\text{atom}}$ showing the correlation between both metrics even though PM compares relative positions, whereas RMSD relies on absolute ones.
This relation shows that PM score is a good proxy for the RMSD assessment. In particular: "\textit{as long as reported PM is greater than 2 times square root of 2 \.angstr\"oms, RMSD is greater than 2 \.angstr\"oms."}.

\paragraph{$\mathcal{S}$-Permutation invariant metric}
\label{sec:perm_inv}
The molecular assembly task as defined in this paper aims at matching a set $\mathcal{S}_{\text{initial}}$ of $M$ equivalent initial molecules to a set $\mathcal{S}_{\text{final}}$ of $M$ final ones.
As these molecules are identical, positioning one at a given place or another is strictly equivalent physically. A proper metric should thus reflect this $\mathcal{S}$-permutation invariant property, which we now present.

We represent molecules $i$ from the predicted assembly and $j$ from the ground truth assembly by their rigid-body positions $\mathcal{T}_{\text{pred}}^i$ and $\mathcal{T}_{\text{gt}}^j$ in the global reference frame,
from which we can reconstruct atoms positions $\vec{x}_{i}^{\text{pred}}$ and $\vec{x}_{j}^{\text{gt}}$ to compute the PM scores.
We consider the cost matrix $\mathcal{C}^{\mathcal{L}}$ of any metric $\mathcal{L}$ such as $\text{PM}_{\text{atom}}$ or $\text{PM}_{\text{center}}$, such that $\mathcal{C}^{\mathcal{L}}_{ij}$ is the cost of assigning molecule $i$ from the ground truth assembly with the molecule $j$ in the predicted assembly. This cost matrix is computed as follows:
\begin{equation}
    \forall\{i,j\} \in \llbracket 1, M \rrbracket ^2, \quad \mathcal{C}_{ij}^{\mathcal{L}} = 
    \mathcal{L}\left(\mathcal{T}_{\text{pred}}^i, \mathcal{T}_{\text{gt}}^j \right).
\end{equation}
The goal is then to find a complete assignment of molecules in the predicted assembly with molecules in the ground truth assembly, which minimizes the metric $\mathcal{L}$ over all $\mathcal{S}$-permutations. This minimizer is denoted $\mathcal{L}^{*}$. Formally it is defined by the linear sum assignment problem. Let $P$ be a boolean pairing matrix in which $P_{ij} = 1$ if and only if molecule $i$ from ground truth assembly is mapped with molecule $j$ in the predicted assembly:
\begin{equation}
     \mathcal{L}^{*}  := \min_{P} \sum_{ij} C_{ij}^{\mathcal{L}}.P_{ij}
    \qquad
     \text{ with } P_{ij} \in \{0,1\} \quad
     \text{ s.t. } P\cdot\mathbf{1} = P^\top \cdot\mathbf{1} = 1.
\label{eq:linsum}
\end{equation}
In practice we use scipy's linear sum assignment method to compute this exact minimizer $\mathcal{L}^{*}$ of $\mathcal{L}$.

\subsection{Physically grounded losses}

While the previous paragraph introduces useful permutation invariant atom-wise metrics, well suited for evaluating atomistic predictions, we now turn to defining trainable objectives that are better adapted to the rigid-body formulation of the task. Concretely, we propose two rigid-body loss functions: $\mathcal{L}_{\text{RMSD}}$, which operates on absolute positions of molecules, and $\mathcal{L}_{\text{Geom}}$, which extends this formulation to relative molecular positions. We will further show how these objectives can be made $\mathcal{S}$-permutation invariant through differentiable optimal assignment in section \ref{sec:method_hung}.

We now consider rigid body predicted (resp. ground-truth) positions in the global reference frame as $\mathcal{T}_{\text{~pred}}=(\vec{r}_{\text{~pred}}, \mathbf{q}_{\text{~pred}})$ (resp. $\mathcal{T}_{\text{~gt}}=(\vec{r}_{\text{~gt}}, \mathbf{q}_{\text{~gt}})$).
The loss currently used in the literature decouples $\mathbb{R}^3$ and $\text{SO}(3)$ spaces as: 
\begin{equation}
    \centering
    \begin{minipage}[c]{.45\linewidth}
        \centering
        $\mathcal{L}_{\mathbb{~R}^3}(\mathcal{T}_{\text{~pred}}, \mathcal{T}_{\text{~gt}}) = \|\vec{r}_{\text{~pred}} - \vec{r}_{\text{~gt}}\|^2$
    \end{minipage}
    \begin{minipage}[c]{.45\linewidth}
        \centering
        $\mathcal{L}_{\text{~SO(3)}}(\mathcal{T}_{\text{~pred}}, \mathcal{T}_{\text{~gt}}) = \|\mathbf{q}_{\text{~pred}} - \mathbf{q}_{\text{~gt}}\|^2$,
    \end{minipage}
\end{equation}
and then one can combine them with a tuned hyperparameter $\alpha$ as
\begin{equation}
    \centering
    \mathcal{L}_{\text{ML}} = \mathcal{L}_{\mathbb{R}^3} + \alpha  \mathcal{L}_{\text{SO(3)}}.
    \label{eq:L_ML}
\end{equation}
The $\alpha$ parameter has to be adjusted to the task one is trying to solve. 
It has to balance the weight of unbounded distance in $\mathbb{R}^3$ to the bounded distance in SO(3). As different
samples in the dataset may have very different geometries, with inter-molecular distances
spanning orders of magnitudes,
having a single parameter is suboptimal. 
Finally, as the space of rigid transformations SE(3) is not a {\em direct product} of $\mathbb{R}^3$ and SO(3), this loss has no physical or geometrical meaning.

\paragraph{Rigid RMSD loss}

\cite{popov2014rapid} introduced a more suitable rigid-body transformation loss that is {\em strictly equivalent} to the RMSD$^2_{\text{atom}}$ metric in eq. \ref{eq:RMSD}. It is defined for a rigid transformation $\mathcal{T}=(\vec{r}, \mathbf{q})$, with quaternion $\mathbf{q}=(s, \vec{q})$ composed of a scalar $s$ and a vector part $\vec{q}$, as follows. 
\begin{equation}
\text{RMSD}^{2}(\mathcal{T}, \mathfrak{I})=
\frac{4}{N}
\vec{q}^{~\top}\mathfrak{I}\vec{q} 
+ \vec{r}^{~2} 
,
\label{eq:RMSDsimple}
\end{equation}
where $\mathfrak{I}$ is an inertia tensor of the rigid body computed in its center-of-mass local frame (see Appendix \ref{app:rmsd} for the definition).
One can notice that in this frame the two RMSD$^2$ contributions, rotation and translation, are additive. The inertia tensor naturally provides a weight between the rotation and the translation contributions.
However, the cross-terms appear in the equation if we change the reference frame as detailed in Appendix \ref{app:rmsd}.
Thus, given two spatial transformations $\mathcal{T}_{\text{pred}}$ and $\mathcal{T}_{\text{gt}}$, of the same rigid body with the inertia tensor $\mathfrak{I}$, we can {\em naturally} define the  physically-grounded RMSD loss
without additional hyperparameters as
\begin{equation}
    \label{eq:L_RMSD}
    \mathcal{L}_{\text{RMSD}}(\mathcal{T}_{\text{pred}}, \mathcal{T}_{\text{gt}}) = {\text{RMSD}^2}(\mathcal{T}_{\text{gt}} \circ \mathcal{T}_{\text{pred}}^{-1}, \mathfrak{I}).
\end{equation}
We will use this RMSD loss as default during training and test to 
compare absolute positions in the predicted assembly of molecules with the ground truth.

\paragraph{Geometric loss}

Regarding the task of molecular assembly prediction, we aim to define a loss that better reflects the \textit{relative packing} of molecules and not memorizing their absolute positions.
Let us consider two rigid molecules in an assembly $\mathcal{S}$ consisting of $M$ molecules, $i,j \in M$.
Let us assume we have predicted a packing $\mathcal{S}_{\text{pred}}$ of these molecules resulting in individual global spatial transformations (or positions) $\mathcal{T}_{i,\text{pred}}$ and $\mathcal{T}_{j,\text{pred}}$. We want to compare these transformations to the corresponding  ground-truth ones $\mathcal{T}_{i,\text{gt}}$ and $\mathcal{T}_{j,\text{gt}}$ from the packing $\mathcal{S}_{\text{gt}}$. We can define the assembly transformation-invariant PM metric for these molecules similar to the one in eq. \ref{eq:pm}. However, this computation requires to first compute positions of all corresponding atoms. 
Instead, we propose a more elegant rigid-body RMSD-based solution presented in Figure \ref{fig:geometric_loss}.
Concretely, we compute the RMSD$^2$
metric between the $i$th molecules in the superposed local frames of the $j$th molecules, as follows,
\begin{equation}
\mathcal{L}_{\text{RMSD}}(\mathcal{T}_{j,\text{pred}}^{-1}  \circ \mathcal{T}_{i,\text{pred}}
    ,
    \mathcal{T}_{j,\text{gt}}^{-1}  \circ \mathcal{T}_{i,\text{gt}})
    =
    {\text{RMSD}^2}(\mathcal{T}_{j,\text{gt}}^{-1}  
    \circ 
    \mathcal{T}_{i,\text{gt}} \circ 
    \mathcal{T}_{i,\text{pred}}^{-1}
    \circ \mathcal{T}_{j,\text{pred}}
    , \mathfrak{I}).
\end{equation}
We can then extend the above expression to the comparison of $M-1$ molecules to
a {\em reference} one. Without loss of generality, we can assume it is the $M$th molecule and define the {\em geometric loss} $\mathcal{L}_{\text{Geom}}$ as follows:
\begin{equation}
    \label{eq:L_Geom}
    \mathcal{L}_{\text{Geom}}\left(\mathcal{T}_{\text{pred}}, \mathcal{T}_{\text{gt}}\right) = \frac{1}{M-1} \sum_{i=1}^{M-1} 
   \mathcal{L}_{\text{RMSD}}(\mathcal{T}_{M,\text{pred}}^{-1}  \circ \mathcal{T}_{i,\text{pred}}
    ,
    \mathcal{T}_{M,\text{gt}}^{-1}  \circ \mathcal{T}_{i,\text{gt}}).
\end{equation}
In practice in COD-Cluster17, as detailed in section \ref{sec:problem_form}, the way the dataset is constructed defines a reference molecule around which we extract the 16 nearest neighbors. 
We show some illustration of the proposed losses in Appendix \ref{sec:loss_illustration}.

\begin{figure}[th!]
\centering
\includegraphics[width=1\textwidth]{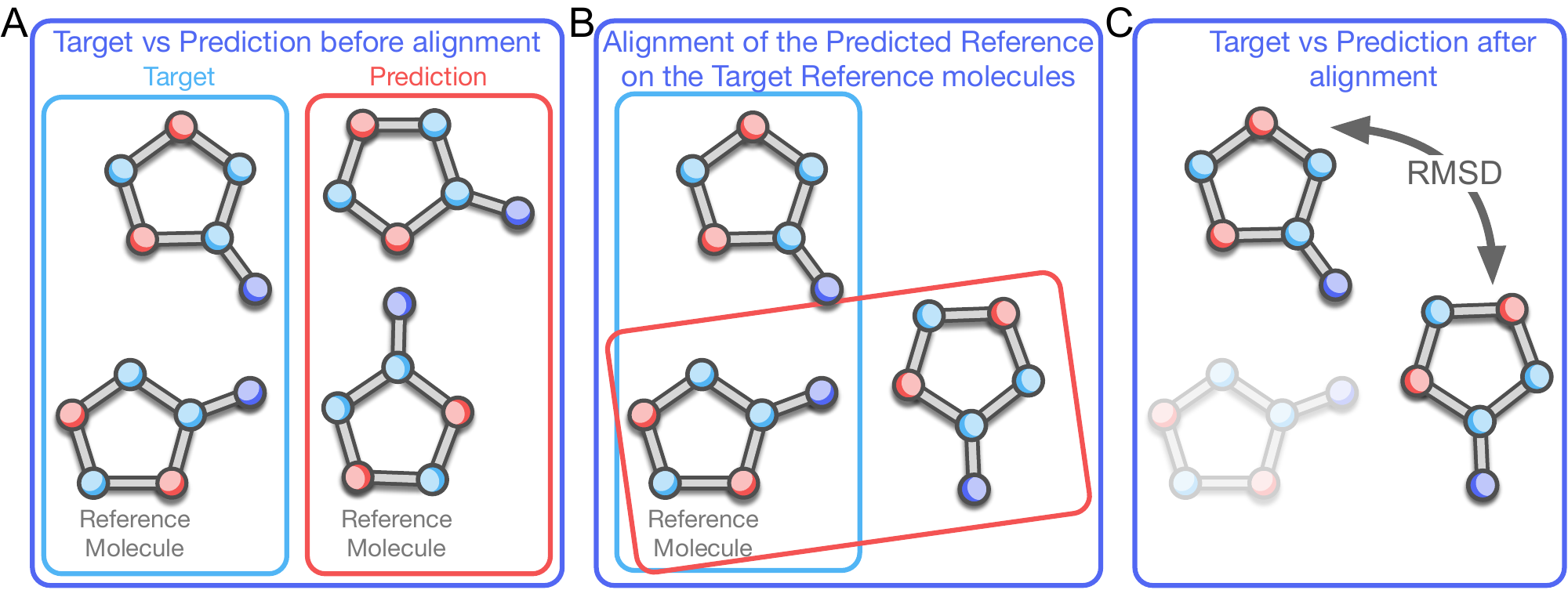}
\caption{A schematic illustration of our geometric loss alignment and similarity measure computation. \textit{A:} Predicted and target couples of molecules with local frames before alignment. \textit{B:} The reference molecule from the predicted packing is aligned on the one from the target packing. \textit{C:} Predicted and target molecules after aligning both reference molecules on each other. The similarity measure is then computed as the RMSD$^2$ between the non-reference molecules.}
\label{fig:geometric_loss}
\end{figure}

\subsection{Differentiable optimal assignment}
\label{sec:method_hung}

As metrics are computed with invariance to $\mathcal{S}$-permutations, it is essential to also train models with permutation invariant losses. 
However, the linear sum assignment problem \ref{eq:linsum} is not differentiable and 
results in training instabilities, as our preliminary experiments demonstrated.
We thus use during training the differentiable version of it provided by the Sinkhorn algorithm with the boolean pairing $P$ matrix being relaxed as $\forall \{i,j\}, 0 < P_{ij} < 1$. The problem is then defined for any training loss $\mathcal{L}$, like $\mathcal{L}_{\text{ML}}, \mathcal{L}_{\text{RMSD}}$ or $\mathcal{L}_{\text{Geom}}$, such that:
\begin{equation}
\label{eq:diff_assign}
\begin{split}
    &\mathcal{L}^{*}_{\text{train}} =  \min_{P} \langle P, \mathcal{C}^{\mathcal{L}}\rangle_{F} \, + \, \text{reg} \cdot \Omega(P)\\
    & \text{ with } P_{ij} \in [0,1] \quad
     \text{s.t. } P\cdot\mathbf{1} = P^\top \cdot\mathbf{1} = 1 \text{ and } P \geq 0\\
    & \text{with } \Omega(P) = \sum_{ij} P_{ij} \text{log}(P_{ij})\\
\end{split}
\end{equation}
An implementation of this algorithm as defined in \cite{cuturi2013sinkhorndistanceslightspeedcomputation} can be found in Python Optimal Transport library \citep{flamary2021pot}.
This approach provides a feedback to the model with multiple possible assignments weighted by $P$, which behaves like a \textit{probability map}.

\section{Results}
\subsection{Experimental setup}

\paragraph{Dataset}

We evaluate the performance of our approach on the COD-Cluster17 benchmark introduced in \citet{liu2024equivariant}. The dataset and the task are detailed in section \ref{sec:problem_form}. It contains 111k assemblies and is a simplified, sanitized version of the 507k crystals from the real world Crystallography Open Database.
Previous methods also benchmark on a subset of \textit{5k} assemblies.
This benchmark comes with a splitting strategy into 80\% for train, 10\% for validation and 10\% for test.
The validation set is used for best method selection throughout the training epochs and the final performances presented in this section are obtained on the test set.
Following previous works, we compare our approach on 3 seeds and report below the average performance.

\paragraph{Model}

We primarily compare our method called {\em SinkFast} to the state-of-the-art AssembleFlow \citep{guo2025assembleflow} method and thus reuse the same model. In particular, we consider here the atom-level SE(3)-equivariant model version described in Appendix \ref{app:model}.

\paragraph{Training methods}

AssembleFlow uses a flow matching setting, in which the model is trained on various interpolated rigid-body positions, which helps guide the optimization process.
In contrast, our method {\em SinkFast} is trained in {\em simple regression}, which is defined as the task of predicting the target rigid-body positions directly from the initial positions. In this setting, the model takes exclusively as input the initial rigid-body positions. Simple regression is equivalent to a one-step flow matching.

While AssembleFlow is trained with $\mathcal{L}_{\text{ML}}$ (Eq. \ref{eq:L_ML}) as the training objective, our method {\em SinkFast} is trained with one of the following objectives: $\mathcal{L}_{\text{ML}}^{*}$ (Eq. \ref{eq:L_ML}), $\mathcal{L}_{\text{RMSD}}^{*}$ (Eq. \ref{eq:L_RMSD}) or $\mathcal{L}_{\text{Geom}}^{*}$ (Eq. \ref{eq:L_Geom}) with permutation-invariance (Eq. \ref{eq:diff_assign}).
Standard hyperparameters and models' parameters are provided in Appendix \ref{app:model}.
In particular, we keep the hyperparameter $\alpha$ in Eq. \ref{eq:L_ML} fixed to 10,
as it was tuned by AssembleFlow authors for the task,
and use 50 time steps of flow matching.

\paragraph{Baselines}
Inorganic crystal structure prediction is a fast-moving domain in which many state-of-the-art models compete and innovate. As we want to compare the performance of current organic state of the art to the inorganic one, we conduct experiments on the COD-Cluster17-5k dataset by retraining both CDVAE \citep{xie2022crystaldiffusionvariationalautoencoder} and DiffCSP \citep{jiao2024crystalstructurepredictionjoint} models. Implementation details are provided in Appendix \ref{app:inorganic_models}. 
Three other baselines, PackMol, GNN-MD and CrystalFlow-LERP, are motivated by the AssembleFlow paper \citep{guo2025assembleflow}, from which their results are extracted.

All models were trained on a single NVidia H100 GPU system, with 80GB memory and 67 TFlops.
We trained them for 500 epochs, with a batch size of 8, Adam optimizer with a learning rate $10^{-4}$ adapted by a Cosine Annealing scheduler.

\setlength{\tabcolsep}{0.4em}
\begin{table*}[tbp!]
\caption{Our best model {\em SinkFast} against state-of-the-art models on COD-Cluster17. Our method is trained with $\mathcal{L}_{\text{ML}}^{*}$ (Eq. \ref{eq:L_ML}) or $\mathcal{L}_{\text{RMSD}}^{*}$ (Eq. \ref{eq:L_RMSD}) with permutation-invariance (Eq. \ref{eq:diff_assign}). The best results are marked in \textbf{bold}.}
\centering
\scalebox{0.91}{
\begin{tabular}{lccccc}
\toprule[\thick pt]
 & \multirow{2}{*}{\makecell{\textbf{Flow}\\\textbf{Matching}}} & \multicolumn{4}{c}{\textbf{Packing Matching} in \AA $\downarrow$}\\
\cmidrule[\thick pt](r{5pt}l{5pt}){3-6}
\multicolumn{1}{c}{} &  & $\textbf{PM}_{\textbf{center}}$ & $\textbf{PM}_{\textbf{atom}}$ & $\textbf{PM}_{\textbf{center}}^{*}$ & $\textbf{PM}_{\textbf{atom}}^{*}$ \\
\midrule
\multicolumn{6}{c}{Dataset: \textbf{COD-Cluster17-5K}}\\
\midrule
PackMol \citep{Martinez2009PACKMOL} &  & $6.05_{\pm 0.05}$ & $7.10_{\pm 0.05}$ & - & - \\
CDVAE \citep{xie2022crystaldiffusionvariationalautoencoder} &  & - & - & $10.50_{\pm 0.52}$ & $14.81_{\pm 0.89}$ \\
DiffCSP \citep{jiao2024crystalstructurepredictionjoint} &  & - & - & $23.50_{\pm 2.44}$ & $30.61_{\pm 2.53}$ \\
GNN-MD \citep{liu2024multigrainedsymmetricdifferentialequation} &  & $13.80_{\pm 0.07}$ & $13.67_{\pm 0.06}$ & - & - \\
CrystalFlow-LERP \citep{liu2024equivariant} & \checkmark & $13.26_{\pm 0.09}$ & $ 13.59_{\pm 0.09}$ & - & - \\
AssembleFlow \citep{guo2025assembleflow} & \checkmark & $6.13_{\pm 0.10}$ & $7.27_{\pm 0.04}$ & $3.86_{\pm 0.13}$ & $5.79_{\pm 0.012}$ \\
\cmidrule(r{5pt}l{5pt}){1-6}
SinkFast - $\mathcal{L}_{\text{ML}}^{*}$ \textit{(ours)} &  & $\textbf{5.80}_{\pm \textbf{0.03}}$ & $\textbf{6.96}_{\pm \textbf{0.03}}$ & $\textbf{3.60}_{\pm \textbf{0.04}}$ & $\textbf{5.54}_{\pm \textbf{0.04}}$ \\
SinkFast - $\mathcal{L}_{\text{RMSD}}^{*}$ \textit{(ours)} &  & $\textbf{5.85}_{\pm \textbf{0.05}}$ & $\textbf{6.98}_{\pm \textbf{0.05}}$ & $3.77_{\pm 0.12}$ & $5.67_{\pm 0.08}$ \\
\midrule
\multicolumn{6}{c}{Dataset: \textbf{COD-Cluster17-All}}\\
\midrule
PackMol \citep{Martinez2009PACKMOL} &  & $6.09_{\pm 0.01}$ & $7.15_{\pm 0.01}$ & - & - \\
GNN-MD \citep{liu2024multigrainedsymmetricdifferentialequation} &  & $14.51_{\pm 0.82}$ & $22.30_{\pm 12.04}$ & - & - \\
CrystalFlow-LERP \citep{liu2024equivariant} & \checkmark & $13.28_{\pm 0.01}$ & $ 13.61_{\pm 0.00}$ & - & - \\
AssembleFlow \citep{guo2025assembleflow} & \checkmark & $6.21_{\pm 0.01}$ & $7.37_{\pm 0.01}$ & $3.51_{\pm 0.05}$ & $5.60_{\pm 0.03}$ \\
\cmidrule(r{5pt}l{5pt}){1-6}
SinkFast - $\mathcal{L}_{\text{ML}}^{*}$ \textit{(ours)} &  & $\textbf{5.80}_{\pm \textbf{0.00}}$ & $\textbf{7.00}_{\pm \textbf{0.01}}$ & $3.47_{\pm 0.04}$ & $\textbf{5.51}_{\pm \textbf{0.02}}$ \\
SinkFast - $\mathcal{L}_{\text{RMSD}}^{*}$ \textit{(ours)} &  & $\textbf{5.80}_{\pm \textbf{0.00}}$ & $\textbf{7.00}_{\pm \textbf{0.01}}$ & $\textbf{3.41}_{\pm \textbf{0.04}}$ & $5.54_{\pm 0.01}$ \\
\bottomrule[\thick pt]
\end{tabular}
}
\label{tab:metric_PM_Hung_all}
\end{table*}

\subsection{Main results}

In Table \ref{tab:metric_PM_Hung_all} we present our models performance against six other state-of-the-art models on the COD-Cluster17 dataset.
First, while original state-of-the-art results are presented in $\text{PM}_{\text{center}}$ and $\text{PM}_{\text{atom}}$, we show under $\text{PM}_{\text{center}}^*$ and $\text{PM}_{\text{atom}}^*$ the importance of optimal assignment to get the best metric performance over the set of $\mathcal{S}$-permutations of the predicted assembly as detailed in section \ref{sec:method_hung}. Indeed the metric decreases greatly under this optimal assignment, indicating the inability of models to memorize positions for each molecule as they are equivalent.
Then, we show that our method outperforms all other baselines on both 5k subset and the full dataset by a significant margin. Notably our method is the only deep learning method that outperforms PackMol on both datasets.

\subsection{Ablation studies}

\paragraph{Training with introduced losses}

Table \ref{tab:loss_train_sinkhorn_all} lists experiments conducted when training with physically grounded losses, both in flow matching as in AssembleFlow \citep{guo2025assembleflow} and in our simple regression model with permutation invariant loss.
We can draw 2 main conclusions: (1) training with $\mathcal{L}_{\text{RMSD}}$ (Eq. \ref{eq:L_RMSD}) performs on par with training with the tuned standard $\mathcal{L}_{\text{ML}}$ (Eq. \ref{eq:L_ML}) while having no parameter to tune. And (2) both previous absolute losses fail to perform on the geometric relative packing loss metric $\mathcal{L}_{\text{Geom}}^{*}$ (Eq. \ref{eq:L_Geom}) while training with it is the solution. In reverse, training with this relative loss yields poor results on absolute ones. This mainly shows the limitations of the absolute packing matching task of interest here.

Moreover, these results along with ablation in Appendix \ref{app:results} 
show a 
significant gain, dividing by up to 3.8 the evaluation metric, when
training with the 
$\mathcal{S}$-permutation 
invariant losses, while not being useful in flow matching. This reveals a redundancy between the two and that the main interest of flow matching on this task thus lies in being an 
optimal transport approximator. 
While the added value of flow matching is yet to be proven, its usage out of the box may not come handy. We believe it should be better adapted to the domain specificities and tasks in further studies.

As rigid body transformations are composed here of both a rotation and a translation individually predicted by our model, we present in Appendix \ref{sec:angular_prediction} the individual pure $\mathbb{R}^3$ and SO(3) performances. This experiment shows that both AssembleFlow and SinkFast focus on positioning molecules in space while mostly discarding their orientation.

\setlength{\tabcolsep}{0.4em}
\begin{table*}[tbp!]
\caption{Ablation study of using physically grounded losses during training on COD-Cluster17 in two different training schemes: flow matching or simple regression with permutation-invariant loss.}
\centering
\scalebox{.91}{
\begin{tabular}{lccccccc}
\toprule[\thick pt]
\multicolumn{2}{c}{} & \multicolumn{2}{c}{\textbf{Test Loss} in \AA $\downarrow$} & \multicolumn{4}{c}{\textbf{Packing matching} in \AA $\downarrow$}\\
\cmidrule[\thick pt](r{5pt}l{5pt}){3-4} \cmidrule[\thick pt](r{5pt}l{5pt}){5-8}
\textbf{Loss} & \textbf{Flow Matching} & 
$\mathcal{L}_{\text{RMSD}}^{*}$ & $\mathcal{L}_{\text{Geom}}^{*}$ & $\textbf{PM}_{\textbf{center}}^{*}$ & $\textbf{PM}_{\textbf{atom}}^{*}$ & $\textbf{PM}_{\textbf{center}}$ & $\textbf{PM}_{\textbf{atom}}$ \\
\midrule[\thick pt]
\multicolumn{8}{c}{Dataset: \textbf{COD-Cluster17-5K}}\\
\midrule[\thick pt]
$\mathcal{L}_{\text{ML}}$ & \checkmark & $9.56_{\pm 0.07}$ & $13.35_{\pm 0.30}$ & $3.86_{\pm 0.13}$ & $5.79_{\pm 0.12}$ & $6.28_{\pm 0.15}$ & $7.39_{\pm 0.16}$ \\
$\mathcal{L}_{\text{RMSD}}$ & \checkmark & $9.41_{\pm 0.19}$ & $13.37_{\pm 0.24}$ & $3.89_{\pm 0.16}$ & $5.80_{\pm 0.16}$ & $6.26_{\pm 0.20}$ & $7.35_{\pm 0.21}$ \\
$\mathcal{L}_{\text{Geom}}$ & \checkmark & $9.22_{\pm 0.06}$ & $10.45_{\pm 0.55}$ & $3.90_{\pm 0.08}$ & $5.85_{\pm 0.04}$ & $6.14_{\pm 0.02}$ & $7.20_{\pm 0.05}$ \\
\cmidrule(r{5pt}l{5pt}){1-8}
$\mathcal{L}_{\text{ML}}^{*}$ &  & $\textbf{8.69}_{\pm \textbf{0.06}}$ & $12.16_{\pm 0.12}$ & $\textbf{3.60}_{\pm \textbf{0.04}}$ & $\textbf{5.54}_{\pm \textbf{0.04}}$ & $\textbf{5.80}_{\pm \textbf{0.03}}$ & $\textbf{6.96}_{\pm \textbf{0.03}}$ \\
$\mathcal{L}_{\text{RMSD}}^{*}$ &  & $\textbf{8.73}_{\pm \textbf{0.07}}$ & $12.05_{\pm 0.15}$ & $3.77_{\pm 0.12}$ & $5.67_{\pm 0.08}$ & $\textbf{5.85}_{\pm \textbf{0.05}}$ & $\textbf{6.98}_{\pm \textbf{0.05}}$ \\
$\mathcal{L}_{\text{Geom}}^{*}$ &  & $9.32_{\pm 0.06}$ & $\textbf{8.78}_{\pm \textbf{0.05}}$ & $5.55_{\pm 0.15}$ & $6.54_{\pm 0.07}$ & $6.92_{\pm 0.07}$ & $7.46_{\pm 0.02}$ \\
\midrule[\thick pt]
\multicolumn{8}{c}{Dataset: \textbf{COD-Cluster17-All}}\\
\midrule[\thick pt]
$\mathcal{L}_{\text{ML}}$ & \checkmark & 
$9.26_{\pm 0.18}$ & $12.02_{\pm 0.30}$ & $3.51_{\pm 0.05}$ & $5.60_{\pm 0.03}$ & $5.96_{\pm 0.02}$ & $7.15_{\pm 0.03}$\\
$\mathcal{L}_{\text{RMSD}}$ & \checkmark & 
$9.08_{\pm 0.12}$ & $12.17_{\pm 0.33}$ & $3.51_{\pm 0.04}$ & $5.60_{\pm 0.03}$ & $5.97_{\pm 0.05}$ & $7.18_{\pm 0.05}$ \\
$\mathcal{L}_{\text{Geom}}$ & \checkmark & 
$9.33_{\pm 0.10}$ & $10.77_{\pm 0.13}$ & $3.78_{\pm 0.09}$ & $5.76_{\pm 0.05}$ & $6.09_{\pm 0.07}$ & $7.21_{\pm 0.05}$ \\
\cmidrule(r{5pt}l{5pt}){1-8}
$\mathcal{L}_{\text{ML}}^{*}$ &  & 
$\textbf{8.65}_{\pm \textbf{0.02}}$ & $12.10_{\pm 0.10}$ & $\textbf{3.47}_{\pm \textbf{0.04}}$ & $\textbf{5.51}_{\pm \textbf{0.02}}$ & $\textbf{5.80}_{\pm \textbf{0.00}}$ & $\textbf{7.00}_{\pm \textbf{0.01}}$ \\
$\mathcal{L}_{\text{RMSD}}^{*}$ &  & 
$\textbf{8.70}_{\pm \textbf{0.03}}$ & $12.16_{\pm 0.08}$ & $\textbf{3.41}_{\pm \textbf{0.04}}$ & $\textbf{5.54}_{\pm \textbf{0.01}}$ & $\textbf{5.80}_{\pm \textbf{0.00}}$ & $\textbf{7.00}_{\pm \textbf{0.01}}$ \\
$\mathcal{L}_{\text{Geom}}^{*}$ &  & 
$9.35_{\pm 0.00}$ & $\textbf{8.71}_{\pm \textbf{0.03}}$ & $5.43_{\pm 0.10}$ & $6.52_{\pm 0.05}$ & $6.84_{\pm 0.06}$ & $7.45_{\pm 0.02}$ \\
\bottomrule[\thick pt]
\end{tabular}
}
\label{tab:loss_train_sinkhorn_all}
\end{table*}

\paragraph{Execution time}

Table \ref{tab:exec_time} lists the execution time for the different methods. In particular, we are interested in the expense of our $\mathcal{S}$-permutation invariant loss and how it compares to the cost of using a flow matching scheme. While it increases the training execution time by 20\% compared to simple regression without the permutation invariance, it saves by a factor 42 the overall training time compared to flow matching. And as it is only used during training, the gain at inference is a factor 50 (number of time steps) compared to flow matching.
As mentioned in \citet{guo2025assembleflow}, PackMol is 25 times slower than AssembleFlow, which makes it about 1,000 times slower than SinkFast.

\setlength{\tabcolsep}{0.4em}
\begin{table*}[tbp!]
\caption{Execution times for our method reported on COD-Cluster17-5k. Results are presented as average over 10 epochs at training and over 10 batches at inference. AssembleFlow is trained with 50 timesteps.
}
\centering
\scalebox{.92}{
\begin{tabular}{lcc}
\toprule[\thick pt]
\textbf{Method} & \makecell{\textbf{Training time (s)} \\ \textbf{per epoch}} & \makecell{\textbf{Test time (s)} \\ \textbf{per batch}} \\
\midrule[\thick pt]
AssembleFlow & 2678.5 & 0.89 \\
SinkFast - \textit{without permutation-invariant loss} & 53.6 & 0.02 \\
SinkFast - \textit{with permutation-invariant loss} & 64.1 & 0.02 \\
\bottomrule[\thick pt]
\end{tabular}
}
\label{tab:exec_time}
\end{table*}

\section{Limitations}

\paragraph{Prediction quality}

Although our objective function refinement helps to boost the reported metrics, the visualizations reported in Appendix \ref{app:viz}
also show the large performance gap remaining to be closed in discovering plausible and stable crystal structures.
In particular, the orientations of molecules are highly incorrect as reported in the ablation study in Appendix \ref{sec:angular_prediction}.
We believe new methods should make use of all the geometrical properties of materials science to design powerful yet efficient algorithms that can reliably perform on tasks always closer to real-life data.
To the best of our knowledge, this limitation affects all published work on the topic, including ours, highlighting the current boundaries of the field and future research challenges.

\paragraph{Generalization}
While this work pushes the frontier of materials discovery on a specific benchmark, its usefulness to other benchmarks is yet to be assessed. Our work has been designed to tackle a weakness in the problem definition of common molecular assembly tasks and highlights the need for a revised dataset definition.
With a real-life application in mind, the absence of periodic boundary conditions is a fundamental limitation of the COD-Cluster17 dataset and thus to this method.
Indeed, a predicted molecule position should be correct up to any unit cell translation. However, as no periodicity information is available, prediction has to match absolute target positions, which
hinders the generalisation capability of any model.
A second major limitation in COD-Cluster17 is the absence of space group information for each training sample. 
The same rigid molecule can crystallize in different configurations according to specific symmetry groups inside a unit cell that then replicates infinitely in space. This conditions global structure prediction -- and thus also the subtask of molecules assembly -- and can give different targets for the same common data. As a result, the generalisation capability of any model is greatly hindered.

\paragraph{Applicability}
If we want to use our model in practice, the molecular conformation is usually not known and deeply related to the crystal structure. We study in Appendix \ref{app:rdkit} the rigid body approximation and our model's performance on a different real-world test set, in which we generate new molecular conformations through RDKit \citep{greg_landrum_2025_16439048}. We conclude that the current approximation is valid, however future models should be trained end-to-end, jointly learning conformation and crystal structure prediction.

\section{Conclusion}

In this paper we have focused on a simpler subtask of the complex organic crystal structure prediction. We have shown the necessity to use meaningful metrics on a benchmark and proven its utmost importance to accurately compare methods. 
We have also demonstrated the importance to train models with rigid-body losses grounded in physical principles that greatly improve performance on molecule assembly. Such metrics are also essential to assess real-world applicability of current methods.
Our main contribution is the demonstration that the appropriate definition of a meaningful learning objective simplifies the problem, boosts the performance and speeds up the training scheme. We release a simple implementation of the method to be used in future benchmarks.
This work invites to take a step back from large generative models and expensive methods, and instead focus on proper problem definition and principled, physics-inspired solutions.

\clearpage
\section*{Acknowledgements}

EJ, RM and JM were supported by the ERC grant number 101087696 (APHELEIA project). 
This work was granted access to the HPC resources of IDRIS under the allocation 2025-AD011014006R2
made by GENCI. 
We thank Kliment Olechnovič for his help with molecular visualization, Roman Klypa for his support with equivariant models and Romain Séailles for his help with the optimization techniques.
We are particularly grateful to Saulius Grazulis from Vilnius University for guiding us through the COD database and sharing with us his valuable expertise.

\bibliography{main}
\bibliographystyle{unsrtnat}

\newpage
\appendix
\onecolumn
\counterwithin{figure}{section}
\counterwithin{table}{section}
\counterwithin{equation}{section}
\counterwithin{algorithm}{section}

\renewcommand{\thefigure}{\Alph{section}.\arabic{figure}}
\renewcommand{\thetable}{\Alph{section}.\arabic{table}}
\renewcommand{\theequation}{\Alph{section}.\arabic{equation}}
\renewcommand{\thealgorithm}{\Alph{section}.\arabic{algorithm}}

\section*{Appendix}
\label{app:app}

We release a version of the code available at \href{https://github.com/EmmanuelJhno/SinkFast}{https://github.com/EmmanuelJhno/SinkFast}

\section{Related Works}
\label{app:related_works}
\subsection{Physics informed GNN for property prediction}

\paragraph{Datasets.}

The fast-moving field of materials science has seen significant advances in recent years, largely driven by the release of large-scale open-source datasets. Many of the works discussed here rely on the QM9 database \citep{ramakrishnan2014quantum}, the Materials Project \citep{jainmaterials} and JARVIS \citep{choudhary2020joint}. 
With the recent release of even larger datasets such as OMol25 \citep{levine2025open}, the domain of materials property prediction and small molecule generation continues to push the boundaries of materials discovery. OMol25 includes over 100 million DFT calculations for larger molecular structures, providing an unprecedented wealth of properties to be predicted. 

\paragraph{GNN models.}

Graph Neural Networks (GNNs) with message passing  \citep{kipf2016semi, rampavsek2022recipe} and transformer-based architectures \citep{ying2021transformers, menegaux2023selfattentioncolorsencodinggraph} have been widely applied to molecular property prediction. Initially adapted from 2D molecular representations, GNNs have been extended to crystalline materials. Notable models include CGCNN \citep{xie2018crystal}, MEGNet \citep{chen2019graph}, and GATGNN \citep{louis2020graph}, which pioneered the application of GNNs to materials property prediction.

\paragraph{Geometry informed GNN models.}

To better capture the geometric and physical properties of materials, geometry-aware GNNs have been developed \citep{duval2023hitchhiker}. Physically grounded models such as ALIGNN \citep{Choudhary_2021}, Matformer \citep{yan2022periodicgraphtransformerscrystal}, PotNet \citep{lin2023efficientapproximationscompleteinteratomic} and ComFormer \citep{yan2024completeefficientgraphtransformers} achieve state-of-the-art results on the Materials Project dataset, demonstrating the importance of incorporating materials science knowledge into predictive models.
Concurrently, SE(3)-equivariant methods, known for their expressivity, have emerged with models such as SchNet \citep{schutt2017schnet}, PaiNN \citep{schutt2021painn}, SEGNN \citep{brandstetter2021geometric}, SphereNet \citep{liu2022sphericalmessagepassing3d}, NequIP \citep{batzner20223} and Equiformer \citep{liao2022equiformer}.

\subsection{Object-centric learning}

\paragraph{Permutation invariance in object detection.}

In computer vision, permutation-invariant loss functions have been used and developed in multiple object detection and segmentation \citep{carion2020endtoendobjectdetectiontransformers} and multi-object tracking \citep{xu2020train}. \citet{NEURIPS2020_8511df98} and \citet{ICLR2024_ba4caa85} learn a binding scheme for assigning objects to slots in object property prediction and unsupervised instance discovery.

\paragraph{Point cloud rigid alignment distances.}

In the point cloud registration domain, \citet{wang2019deepclosestpointlearning} have studied rigid alignment of point clouds as well as prediction to target assignment. However, they decorrelate $\mathbb{R}^3$ and SO(3) in the loss and reassign predictions to target only when correspondence is unknown. \citet{pais19} study the registration of 3D scans and learn the rigid alignment using different distances. \citet{park2020procrusteanregressionnetworkslearning} use Procrustes-alignment of 3D shapes to learn a regression problem of predicting 3D positions of a deformable object from 2D frame observations.

\subsection{Generative models in materials science}

\paragraph{Single molecule conformation prediction.}
Generating the 3D stable configuration of a single molecule is essential for materials discovery. Datasets such as GEOM-Drugs \citep{axelrod2022geom} and OMol25 \citep{levine2025open} are tailored for this task. The OMol25 dataset includes evaluations based on linear sum assignment for assessing optimal conformers, guided by machine learning interatomic potentials  \citep{smith2017ani} and Density Functional Theory (DFT) \citep{kohn1965self}.
Generative approaches include flow matching models and SE(3)-equivariant generative models such as those by \citet{cornet2024equivariant} and \citet{song2023equivariant}.

\paragraph{Crystal Structure Prediction (CSP).}

Historically, CSP has relied on computationally expensive iterative methods based on DFT \citep{kohn1965self}, including techniques by \citet{wang2021predicting, glass2006uspex, Pickard_2011}, where atoms are iteratively replaced by chemically similar ones and validated with DFT calculations.
Recently, machine learning has accelerated this process \citep{schmidt2022large, merchant2023scaling}.

\paragraph{Generative models for atomic point clouds.}

For simple crystals from the Materials Project \citep{jainmaterials}, heir 3D infinitely periodic structures can now be directly predicted \citep{Liang_2020, cao2024spacegroupinformedtransformer}. These methods are further enhanced by diffusion models \citep{merchant2023scaling, xie2022crystaldiffusionvariationalautoencoder, pakornchote2023diffusionprobabilisticmodelsenhance, jiao2024crystalstructurepredictionjoint, levy2025symmcd} and flow-matching approaches \citep{luo2025crystalflow, miller2024flowmm}.
Inspired by their success in other domains, Large Language Models have been adapteed to CSP, as seen in CrystalLLM \citep{antunes2024crystal} and models that integrate SE(3) equivariance and periodic boundary conditions \citep{yan2024invariant}.

\paragraph{Rigid-body generative models for organic molecular CSP.}

Rigid-body generative models are extensively explored in protein design and backbone generation, as in AlphaFold2 \citep{jumper2021highly}, FrameDiff \citep{yim2023se}, and FrameFlow \citep{yim2023fast}.
Closer to molecular crystals, studies now focus on assembly prediction. For example, \citet{liu2024equivariant} propose atom-wise equivariant flow matching, while \citet{guo2025assembleflow} introduce a rigid body flow matching model for molecular cluster packing prediction.

\section{RMSD and Rigid Motions}
\label{app:rmsd}
\subsection{ Notations}

Throughout this section we will be generally dealing with $3\times3$ matrices and 3-vectors. Therefore, for linear algebra operations we will stick to the following notation. 
Bold upper case letters (i.e., $\mathbf{A}$) will denote matrices,  normal weight lower case letters (i.e., $c$) will denote scalars,
and we will also use an arrow notation for 3-vectors, such as $\vec{v}$. 
Most of the information reported here can be found in the original papers that deal with rigid-body measures for rigid molecules by \citet{popov2014rapid,pages2018analytical1,pages2018analytical2}.

\subsection{Rigid-body arithmetic}
As we introduced in the main text, a rigid spatial transformation operator $\mathcal{T} = (\vec{t}, Q) $ is composed of a 3D translation $\vec{t} \in \mathbb{R}^3$ and a 3D rigid rotation quaternion $Q = [s, \Vec{q}] \in \text{SO(3)}$, which can also be represented with a rotation matrix $\bf R$, such that $\mathcal{T} = (\vec{t}, \mathbf{R}) $.
It is useful to introduce a composition of spatial transformation operators $\mathcal{T}_2 \circ \mathcal{T}_1$, where the operator $\mathcal{T}_1$ on the right is applied first, and an inverse $\mathcal{T}^{-1}$.
The composition will be given as
\begin{equation}
    \mathcal{T}_2 \circ \mathcal{T}_1 =
    (\vec{t}_2 + \mathbf{R}_2 \vec{t}_1, \mathbf{R}_2 \mathbf{R}_1)
    \equiv
        (\vec{t}_2 + Q_2 \cdot \vec{t}_1, Q_2 \cdot Q_1),
\end{equation}
where we define the quternion product in the next section.
The inverse will be:
\begin{equation}
    \mathcal{T}^{-1} =
    (- \mathbf{R}^{-1} \vec{t}, \mathbf{R}^{-1})
    \equiv
     (- \mathbf{R}^{T} \vec{t}, \mathbf{R}^{T})
    \equiv
       (- Q^{-1} \cdot \vec{t}, Q^{-1}),
\end{equation}
with an inverse quternion defined below.

\subsection{Quaternion arithmetic}

It is very convenient to express three-dimensional rotations using quaternion arithmetic. 
Thus, we will give a brief summary of it here. 
We consider a quaternion $Q$ as a combination of a scalar
$s$ with a 3-component vector $\vec{q}=\{q_{x},q_{y},q_{z}\}$,
$Q=[s,\vec{q}]$. 
Quaternion algebra defines multiplication, division, inversion and norm,  among other operations. 
The product of two quaternions $Q_{1}=[s_{1},\vec{q}_{1}]$
and $Q_{2}=[s_{2},\vec{q}_{2}]$ is a quaternion and can be expressed
through a combination of scalar and vector products:
\begin{equation}
Q_{1}\cdot Q_{2}\equiv[s_{1},\vec{q}_{1}]\cdot[s_{2},\vec{q}_{2}]=\left[s_{1}s_{2}-(\vec{q}_{1}\cdot\vec{q}_{2}),s_{1}\vec{q}_{2}+s_{2}\vec{q}_{1}+(\vec{q}_{1}\times\vec{q}_{2})\right].
\end{equation}
The squared norm of a quaternion $Q$ is given as $\left|Q\right|^{2}=s^{2}+\vec{q}\cdot\vec{q}$,
and a unit quaternion $\hat{Q}$  is a quaternion with its norm equal to 1. An
inverse quaternion $Q^{-1}$ is given as $Q^{-1}=[s,-\vec{q}]/\left|Q\right|^{2}$.
A vector $\vec{v}$ can be treated as a quaternion with a zero scalar
component, $\vec{v}\equiv[0,\vec{v}]$. Then, a unit quaternion
$\hat{Q}$ can be used to rotate vector $\vec{v}$ to a new position
$\vec{v}'$ as follows

\begin{equation}
\left[0,\vec{v}'\right]=\hat{Q}\left[0,\vec{v}\right]\hat{Q}^{-1}=\left[0,(s^{2}-\vec{q}^{2})\vec{v}+2s(\vec{q}\times\vec{v})+2(\vec{q}\cdot\vec{v})\vec{q}\right]=\left[0,\vec{v}+2\vec{q}\times(\vec{q}\times\vec{v}+s\vec{v})\right].\label{eq:QuaternionRotation}
\end{equation}
Equivalently, the same rotation can be represented with a rotation
matrix $\mathbf{R}$, such that $\vec{v}'=\mathbf{R}\vec{v},$
where $\mathbf{R}$ can be expressed through the components of the
quaternion $\hat{Q}$ as 
\begin{equation}
\mathbf{R}=\left(\begin{array}{ccc}
s^{2}+q_{x}^{2}-q_{y}^{2}-q_{z}^{2} & 2q_{x}q_{y}-2sq_{z} & 2q_{x}q_{z}+2sq_{y}\\
2q_{x}q_{y}+2sq_{z} & s^{2}-q{}_{x}^{2}+q{}_{y}^{2}-q_{z}^{2} & 2q_{y}q_{z}-2sq_{x}\\
2q_{x}q_{z}-2sq_{y} & 2q_{y}q_{z}+2sq_{x} & s^{2}-q{}_{x}^{2}-q{}_{y}^{2}+q_{z}^{2}
\end{array}\right).\label{eq:QuatToMatrix}
\end{equation}
A unit quaternion $\hat{Q}$ corresponding to a rotation by an angle
$\alpha$ around a unit axis $\vec{u}$ is given as $\hat{Q}=[\cos\frac{\alpha}{2},\vec{u}\sin\frac{\alpha}{2}]$,
and its inverse is $\hat{Q}^{-1}=[\cos\frac{\alpha}{2},-\vec{u}\sin\frac{\alpha}{2}]$.
Finally, $N$ sequential rotations around different unit axes defined
by unit quaternions $\{\hat{Q}_{i}\}_{N}$ result in a new vector
$\vec{v}'$ according to

\begin{equation}
\left[0,\vec{v}'\right]=\hat{Q}_{N}\hat{Q}_{N-1}...\hat{Q}_{2}\hat{Q}_{1}\left[0,\vec{v}\right]\hat{Q}_{1}^{-1}\hat{Q}_{2}^{-1}...\hat{Q}_{N-1}^{-1}\hat{Q}_{N}^{-1}.
\end{equation}

\subsection{Root mean square deviation}

The root mean square deviation (RMSD) is one of the most widely used similarity criteria in chemistry, structural biology, bioinformatics, and material science.
We will stick to this measure here, as it is very powerful, easy to understand and also because it can be computed very efficiently.
For our particular needs we will use the definition of RMSD between two ordered sets of points, where each point has an equal contribution to the overall RMSD loss.
More precisely, given a set of $N$ points $A=\{\vec{a}_{i}\}_{N}$ and $B=\{\vec{b}_{i}\}_{N}$
with associated weights $w=\{w_{i}\}_{N}$, the RMSD  between them is defined as
%
\begin{equation}
\text{RMSD}(A,B)^{2}=\frac{1}{W}\sum_{1 \leq i \leq N} w_i \left|\vec{a}_{i}-\vec{b}_{i}\right|^{2},
\label{eq:RMSD_weighted}
\end{equation}
where $W=\sum_{i}w_{i}$. Here, $\{w_{i}\}_{N}$ are statistical weights that
may emphasize the importance of a certain part of the molecular structure, for
example in case of a protein, the backbone or C$_\alpha$ atoms. These
weights can also be equal to atomic masses (in this case $W$ equals
to the total mass of the molecule) or may be set to unity (in this
case $W=N$).
In this work, we set the weights to unity, thus
\begin{equation}
\text{RMSD}(A,B)^{2}=\frac{1}{N}\sum_{1 \leq i \leq N}\left|\vec{a}_{i}-\vec{b}_{i}\right|^{2},
\label{eq:standard}
\end{equation}
since it makes the following equations simpler to read and to use in practice.
However, we should keep in mind that the weights can be easily added to all the corresponding equations.

\subsection{ Rigid body motion described with quaternions}

Let $\mathbf{R}$ be a rotation matrix and $\vec{t}$ a translation
vector applied to a molecule with $N$ atoms at positions $A=\{\vec{a}_{i}\}_{N}$
with $\vec{a}_{i}=\{x_{i},y_{i},z_{i}\}^T$, such that the new positions
$A'=\{\vec{a}_{i}'\}_{N}$ are given as $\vec{a}'_{i}=\mathbf{R}\vec{a}_{i}+\vec{t}$.
Then, the weighted RMSD between $A$ and $A'$ will be given as

\begin{equation}
\text{RMSD}^{2}(A,A')=\frac{1}{W}\sum_{i}w_{i}\left|\vec{a}_{i}-\mathbf{R}\vec{a}_{i}-\vec{t}\right|^{2}.\label{eq:RMSDRotationMtrix}
\end{equation}
We can rewrite the previous expression using quaternion representation
of vectors $\vec{a}_{i}$ and $\vec t$ as

\begin{equation}
\text{RMSD}^{2}=\frac{1}{W}\sum_{i}w_{i}\left|[0,\vec{a}_{i}]-\hat{Q}[0,\vec{a}_{i}]\hat{Q}^{-1}-\left[0,\vec{t}\right]\right|^{2}.
\end{equation}
Here, the unit quaternion $\hat{Q}$ corresponds to the rotation matrix
$\mathbf{R}$. Since the norm of a quaternion does not change if we
multiply it by a unit quaternion, we may right-multiply the kernel
 of the previous expression by $\hat{Q}$ to obtain 

\begin{equation}
\text{RMSD}^{2}=\frac{1}{W}\sum_{i}w_{i}\left|[0,\vec{a}_{i}]\hat{Q}-\hat{Q}[0,a_{i}]-[0,\vec{t}]\hat{Q}\right|^{2}.
\end{equation}
Using the scalar\textendash{}vector representation of a quaternion, $\hat{Q}=[s,\vec{q}]$, 
we rewrite the previous RMSD expression as
\begin{equation}
\text{RMSD}^{2}=\frac{1}{W}\sum_{i}w_{i}\left[-\vec{q}\cdot\vec{t},-s\vec{t}+(2\vec{a}_{i}-\vec{t})\times\vec{q}\right]^{2}.\label{eq:RMSDScalarVector}
\end{equation}
Performing scalar and vector products in Eq. \eqref{eq:RMSDScalarVector},
we obtain

\begin{eqnarray}
\text{RMSD}^{2} & = & \frac{1}{W}\sum_{i}w_{i}\left(\left[q_{x}t_{x}+q_{y}t_{y}+q_{z}t_{z}\right]^{2}\right.\nonumber \\
 & + & \left[-st_{x}+q_{y}(2z_{i}-t_{z})-q_{z}(2y_{i}-t_{y})\right]^{2}\label{eq:RMSDExplicit}\\
 & + & \left[-st_{y}+q_{z}(2x_{i}-t_{x})-q_{x}(2z_{i}-t_{z})\right]^{2}\nonumber \\
 & + & \left.\left[-st_{z}+q_{x}(2y_{i}-t_{y})-q_{y}(2x_{i}-t_{x})\right]^{2}\right).\nonumber 
\end{eqnarray}
Grouping terms in Eq. \eqref{eq:RMSDExplicit} that depend on atomic
positions together, we obtain
\begin{eqnarray}
\text{RMSD}^{2} & = & t_{x}^{2}+t_{y}^{2}+t_{z}^{2}+\frac{4}{W}\sum_{i}w_{i}\big\{ q_{x}^{2}(y_{i}^{2}+z_{i}^{2})+q_{y}^{2}(x_{i}^{2}+z_{i}^{2})+q_{z}^{2}(x_{i}^{2}+y_{i}^{2}) \nonumber\\
 & - & 2q_{x}q_{y}x_{i}y_{i}-2q_{x}q_{z}x_{i}z_{i}-2q_{y}q_{z}z_{i}y_{i}\big\} \\
 & + & \frac{4}{W}\left\{ q_{x}q_{z}t_{z}+q_{x}q_{y}t_{y}-q_{z}^{2}t_{x}-q_{y}^{2}t_{x}+sq_{z}t_{y}-sq_{y}t_{z}\right\} \sum_{i}w_{i}x_{i}\nonumber\\
 & + & \frac{4}{W}\left\{ q_{y}q_{z}t_{z}+q_{x}q_{y}t_{x}-q_{x}^{2}t_{y}-q_{z}^{2}t_{y}+sq_{x}t_{z}-sq_{z}t_{x}\right\} \sum_{i}w_{i}y_{i}\nonumber\\
 & + & \frac{4}{W}\left\{ q_{y}q_{z}t_{y}+q_{x}q_{z}t_{x}-q_{x}^{2}t_{z}-q_{y}^{2}t_{z}+sq_{y}t_{x}-sq_{x}t_{y}\right\} \sum_{i}w_{i}z_{i}.\nonumber
\end{eqnarray}
Introducing the inertia tensor $\mathbf{I}$, the rotation matrix
$\mathbf{R}$, the center of mass vector $\vec{c}$, and the $3\times3$
identity matrix $\mathbf{E}_{3}$, we may simplify the previous expression
to
\begin{equation}
\text{RMSD}^{2}=\vec{t}^{2}+\frac{4}{W}\vec{q}^{T}\mathbf{I}\vec{q}+2\vec{t}^{T}\left(\mathbf{R}-\mathbf{E}_{3}\right)\vec{c},\label{eq:principal}
\end{equation}
where $\vec{c}=\frac{1}{W}\left\{ \sum w_{i}x_{i},\sum w_{i}y_{i},\sum w_{i}z_{i}\right\} ^{T}$,
rotation matrix $\mathbf{R}$
corresponds to the rotation with the unit quaternion $\hat{Q}$ according
to Eq. \eqref{eq:QuatToMatrix}, and the inertia tensor $\mathbf{I}$
is given as
\begin{equation}
\mathbf{I}=\left(\begin{array}{ccc}
\sum w_{i}(y_{i}^{2}+z_{i}^{2}) & -\sum w_{i}x_{i}y_{i} & -\sum w_{i}x_{i}z_{i}\\
-\sum w_{i}x_{i}y_{i} & \sum w_{i}(x_{i}^{2}+z_{i}^{2}) & -\sum w_{i}y_{i}z_{i}\\
-\sum w_{i}x_{i}z_{i} & -\sum w_{i}y_{i}z_{i} & \sum w_{i}(x_{i}^{2}+y_{i}^{2})
\end{array}\right).\label{eq:inertiaTensor}
\end{equation}
The RMSD expression  \eqref{eq:principal} consists of three parts, the pure translational contribution $\vec{t}^{2}$,
the pure rotational contribution $\frac{4}{W}\vec{q}^{T}\mathbf{I}\vec{q}$,
and the cross term $2\vec{t}^{T}\left(\mathbf{R}-\mathbf{E}_{3}\right)\vec{c}$.
In this equation, only two variables depend on the atomic positions
$\{\vec{a}_{i}\}_{N}$, the inertia tensor $\mathbf{I}$, and the
center of mass vector $\vec{c}$.
These depend only on the reference structure of a rigid molecule, and can be precomputed.
Moreover, it is practical to choose a reference frame centred on the molecular center of mass.
In this frame, the cross term vanishes and the above RMSD equation simplifies to
\begin{equation}
\text{RMSD}^{2}=\vec{t}^{2}+\frac{4}{W}\vec{q}^{T}\mathbf{I}\vec{q}.
\label{eq:RMSDCOM}
\end{equation}
However, we must bring reader's attention that the inertia tensor must be specifically computed in the chosen reference frame.

\subsection{SE(3) flow matching}
\label{app:se3_flow_matching}
\paragraph{In the Euclidean space}
Conditional Flow Matching \citep{liu2023flow, lipman2023flow, albergo2023building} is a simple scalable method to train generative models. The basic principle is to choose a family $\mathbf{X} = \{(X_t)_{t\in[0,1]}\}$ of interpolating paths between any source distribution $\mathbb{P}_0$ and the target distribution $\mathbb{P}_1$. The paths should be differentiable and have their marginal laws at both ends $t=0$ and $t=1$ match the source and target distributions: $\mathcal{L}(X_0) = \mathbb{P}_0$ and $\mathcal{L}(X_1) = \mathbb{P}_1$, respectively. The flow matching procedure consists in training a neural network $u$ to match the conditional velocity field $v^{\mathbf{X}}$ induced by these paths:

\begin{equation}
    v^{\mathbf{X}}(t, x) = \mathbb{E}\left[\dot{X_t}|X_t=x\right].
\end{equation}

In practice, this family path is created with linear interpolations (LERP) between samples $X_0, X_1$ from $\mathbb{P}_0, \mathbb{P}_1$: 
\begin{equation}
    X_t = (1-t) X_0 + t X_1 = \mathrm{LERP}(X_0, X_1, t).
\end{equation}

At inference time, samples are generated by solving the forward ODE induced by the velocity field, by Euler discretization for example.

\paragraph{In SO(3)}
While this framework was originally designed for $\mathbb{R}^d$, there exists an extension to SO(3) \citep{chen2024flow}. Indeed, by representing rotations with unit quaternions, there is a natural equivalent to linear interpolation, called Spherical Linear Interpolation (SLERP) \citep{shoemake1985slerp}. 
This creates differentiable interpolation paths $(\mathbf{q}_t)$ between source and target quaternions $\mathbf{q}_0, \mathbf{q}_1$:

\begin{equation}
\mathbf{q}_t = \mathrm{SLERP}(\mathbf{q}_0, \mathbf{q}_1; t) = \mathbf{q}_0 (\mathbf{q}_0^1 \mathbf{q}_1)^t .
\end{equation}\\
Combining LERP and SLERP, it is possible to linearly interpolate between two rigid-body transformations $\mathcal{T}_0=(\vec{r}_0, \mathbf{q}_0)$ and $\mathcal{T}_1=(\vec{r}_1, \mathbf{q}_1)$ as $\mathcal{T}_t = \left(\mathrm{LERP}(\vec{r}_0, \vec{r}_1, t), \mathrm{SLERP}(\mathbf{q}_0, \mathbf{q}_1; t) \right)$.

\subsection{Illustration of the proposed physically-grounded losses}
\label{sec:loss_illustration}

We illustrate in Figure \ref{fig:loss_drawing} how the different proposed losses evolve when the prediction is similar to the ground-truth up to a certain rigid-body transformation, either rotation, translation or permutation. In each of these cases, the predicted structure is correct chemically and physically and the loss should thus be 0. This figure helps us illustrate 3 main motivations.
First, the difference between the parameter dependent $\mathcal{L}_{\text{ML}}$ and the physically meaningful $\mathcal{L}_{\text{RMSD}}$. 
Second, the geometric loss is invariant to SE(3) transformations of the global picture but is not invariant to the index permutation of the arbitrarily chosen ordering of identical molecules.
Third, this invariance to index permutation is enabled through the use of the linear sum assignment problem as detailed in section \ref{sec:perm_inv}.

\begin{figure}[!h]
\centering
\includegraphics[width=.9\textwidth]{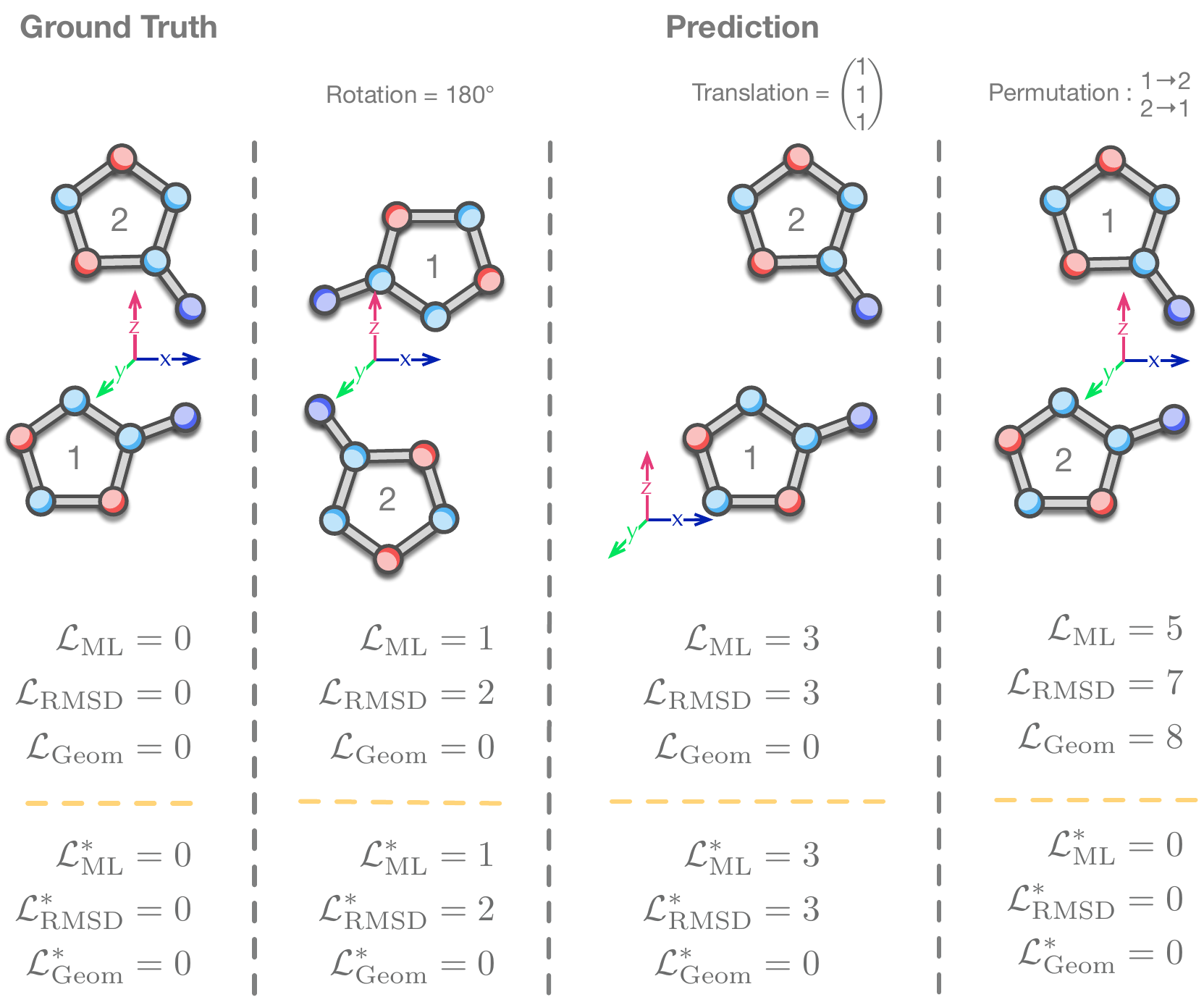}
\caption{
Illustration how the proposed physically-grounded losses evolve under some transformations on a toy example. The numbers are arbitrary and not physically related.}
\label{fig:loss_drawing}
\end{figure}


\section{Metrics}
\label{app:metrics}

\begin{theorem}
 $\text{PM}_{\text{atom}}^2 \leq 2 \text{RMSD}_{\text{atom}}^2$
\end{theorem}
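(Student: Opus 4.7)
The plan is to reduce the claim to a pointwise inequality inside the double sum defining $\text{PM}_{\text{atom}}^2$, and then to exploit the identity that turns $\sum_{i,j}\|\vec e_i - \vec e_j\|^2$ into something proportional to $\sum_i \|\vec e_i\|^2$. Write $\vec e_i := \vec x_i^{\text{~pred}} - \vec x_i^{\text{~gt}}$ for the per-atom displacement errors, and $\vec d_{ij}^{\text{~pred}} := \vec x_i^{\text{~pred}} - \vec x_j^{\text{~pred}}$, similarly for the ground truth. The first step is the reverse triangle inequality applied to each summand: $\bigl|\|\vec d_{ij}^{\text{~pred}}\| - \|\vec d_{ij}^{\text{~gt}}\|\bigr| \leq \|\vec d_{ij}^{\text{~pred}} - \vec d_{ij}^{\text{~gt}}\|$, and the right-hand side collapses to $\|\vec e_i - \vec e_j\|$ since the ground-truth offsets cancel.

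Squaring and summing over $(i,j)$ gives
\begin{equation*}
\text{PM}_{\text{atom}}^2 \;\leq\; \frac{1}{N^2}\sum_{i=1}^N\sum_{j=1}^N \|\vec e_i - \vec e_j\|^2.
\end{equation*}
The key step is then the algebraic identity obtained by expanding the square:
\begin{equation*}
\sum_{i,j}\|\vec e_i - \vec e_j\|^2 \;=\; 2N\sum_{i=1}^N \|\vec e_i\|^2 \;-\; 2\Bigl\|\sum_{i=1}^N \vec e_i\Bigr\|^2.
\end{equation*}
Because the subtracted term is a squared norm, it is nonnegative, so the right-hand side is bounded by $2N\sum_i \|\vec e_i\|^2$. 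Dividing by $N^2$ and recognizing $\tfrac{1}{N}\sum_i \|\vec e_i\|^2 = \text{RMSD}_{\text{atom}}^2$ yields the desired inequality $\text{PM}_{\text{atom}}^2 \leq 2\,\text{RMSD}_{\text{atom}}^2$.

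There is no real obstacle here; the only subtlety worth flagging is that the naive bound $\|\vec e_i - \vec e_j\|^2 \leq 2(\|\vec e_i\|^2 + \|\vec e_j\|^2)$ applied termwise gives a constant of $4$ rather than $2$. Getting the tight constant $2$ requires the centered identity above, which retains the $-2\|\sum_i \vec e_i\|^2$ term and thereby exploits the fact that a uniform translation of the prediction does not change $\text{PM}_{\text{atom}}$. This is consistent with the known translation-invariance of $\text{PM}_{\text{atom}}$ observed just below Eq.~\eqref{eq:pm}, and it is the reason $\sqrt{2}$ rather than $2$ appears in the stated bound.
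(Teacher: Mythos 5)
Your proof is correct, and it takes a genuinely different route from the paper's. The paper expands $\text{PM}_{\text{atom}}^2 - 2\,\text{RMSD}_{\text{atom}}^2$ directly, collects terms into averages $\bar x_{\text{pred}}$, $\bar x_{\text{gt}}$, and then applies Cauchy--Schwarz to the cross term $\|\vec x_{i,\text{pred}} - \vec x_{j,\text{pred}}\|\,\|\vec x_{i,\text{gt}} - \vec x_{j,\text{gt}}\|$ to conclude that the difference is bounded above by $-2\,\|\bar x_{\text{pred}}-\bar x_{\text{gt}}\|^2 \le 0$. You instead apply the reverse triangle inequality termwise to each summand of $\text{PM}_{\text{atom}}^2$, which collapses everything onto the per-atom error vectors $\vec e_i$, and then invoke the variance-type identity $\sum_{i,j}\|\vec e_i - \vec e_j\|^2 = 2N\sum_i\|\vec e_i\|^2 - 2\bigl\|\sum_i\vec e_i\bigr\|^2$ and drop the nonpositive term. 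The two arguments are dual: the reverse triangle inequality is the Cauchy--Schwarz step repackaged, and since $\sum_i \vec e_i = N(\bar x_{\text{pred}}-\bar x_{\text{gt}})$, both routes in fact yield the same sharper bound $\text{PM}_{\text{atom}}^2 \le 2\,\text{RMSD}_{\text{atom}}^2 - 2\|\bar x_{\text{pred}}-\bar x_{\text{gt}}\|^2$. What your version buys is modularity and readability: after the first step the problem involves only $\{\vec e_i\}$, the centering identity is standard, and the origin of the tight constant $2$ (rather than the naive $4$) is made transparent and explicitly linked to the translation invariance of PM, which the paper's brute-force expansion does not surface. The paper's version is shorter to state but requires more careful bookkeeping of means and cross terms.
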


\begin{proof}

Let us first define two metrics $\text{PM}_{\text{atom}}$ and $\text{RMSD}_{\text{atom}}$ as

\begin{equation}
    \text{PM}_{\text{atom}}^2 = 
    {\frac{1}{n_{\text{atom}}^2} \sum_{i \in n_{\text{atom}}} \sum_{j \in n_{\text{atom}}} (||\vec x_{i,\text{pred}} - \vec x_{j,\text{pred}}|| - ||\vec x_{i,\text{gt}} - \vec x_{j,\text{gt}}||)^2},
\end{equation}

\begin{equation}
    \text{RMSD}_{\text{atom}}^2 = 
    {\frac{1}{n_{\text{atom}}} \sum_{i \in n_{\text{atom}}} ||\vec x_{i,\text{pred}} - \vec x_{i,\text{gt}}||^2}.
\end{equation}

We also define $\bar{x}_{\text{pred}} = \frac{1}{n_{\text{atom}}} \sum_i \vec x_{i,\text{pred}}$,
$\bar{x}_{\text{gt}} = \frac{1}{n_{\text{atom}}} \sum_i \vec x_{i,\text{gt}}$,
and use $\cdot$ as the scalar product.
Let us write down the following expression,
\begin{multline}
    \text{PM}_{\text{atom}}^2 - 2\text{RMSD}_{\text{atom}}^2 = \frac{1}{n_{\text{atom}}^2} \sum_{i \in n_{\text{atom}}} \sum_{j \in n_{\text{atom}}} 
    \Big( 4 \vec x_{i,\text{pred}} \cdot \vec x_{i,\text{gt}} 
    -2 (\vec x_{i,\text{pred}} \cdot \vec x_{j,\text{pred}} + \vec x_{i,\text{gt}} \cdot \vec x_{j,\text{gt}} \\
    \hfill + ||\vec x_{i,\text{pred}} - \vec x_{j,\text{pred}}|| ||\vec x_{i,\text{gt}} - \vec x_{j,\text{gt}}||) \Big) \\
    = 
    4 \overline{(x_{\text{pred}} \cdot x_{\text{gt}})} - 
    2 \bar x_{\text{pred}} \cdot \bar x_{\text{pred}} - 
    2 \bar x_{\text{gt}} \cdot \bar x_{\text{gt}} - 
    \frac{2}{n_{\text{atom}}^2} \sum_{i \in n_{\text{atom}}} \sum_{j \in n_{\text{atom}}} ||\vec x_{i,\text{pred}} - \vec x_{j,\text{pred}}|| ||\vec x_{i,\text{gt}} - \vec x_{j,\text{gt}}||
\end{multline}

By Cauchy-Schwarz enquality (or maximizing the cosine of an angle between two vectors), we obtain

\begin{equation}
\begin{split}
\frac{2}{n_{\text{atom}}^2} \sum_{i \in n_{\text{atom}}} \sum_{j \in n_{\text{atom}}} ||\vec x_{i,\text{pred}} - \vec x_{j,\text{pred}}|| ||\vec x_{i,\text{gt}} - \vec x_{j,\text{gt}}||
\geq \\
\frac{2}{n_{\text{atom}}^2} \sum_{i \in n_{\text{atom}}} \sum_{j \in n_{\text{atom}}} (\vec x_{i,\text{pred}} - \vec x_{j,\text{pred}})
\cdot
(
\vec x_{i,\text{gt}} - \vec x_{j,\text{gt}}),
\end{split}
\end{equation}
which gives
\begin{equation}
\begin{split}
  \text{PM}_{\text{atom}}^2 - 2\text{RMSD}_{\text{atom}}^2 
  \leq -2 (\bar \vec x_{\text{pred}} - \bar \vec x_{\text{gt}})^2,
\end{split}
\end{equation}
thus,
\begin{equation}
\begin{split}
\text{PM}_{\text{atom}}^2 - 2\text{RMSD}_{\text{atom}}^2 \leq 0.
\end{split}
\end{equation}
\end{proof}

\begin{theorem}
 PM metric is SE3-invariant.
\end{theorem}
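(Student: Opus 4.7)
The plan is to unpack what SE(3)-invariance means for the PM score and then verify it by a direct computation using the two defining properties of an SE(3) transformation: (i) its rotational part is an orthogonal matrix, hence norm-preserving, and (ii) its translational part acts trivially on differences of points. Concretely, I interpret the statement in the strong sense that PM is invariant under two \emph{independent} rigid motions applied to the predicted and to the ground-truth assemblies, which is the version that actually matters for the benchmark (a model should not be penalised for choosing a different global frame than the annotator).

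First I would fix two arbitrary elements $\mathcal{T}_{\text{pred}} = (\vec{t}_p, \mathbf{R}_p)$ and $\mathcal{T}_{\text{gt}} = (\vec{t}_g, \mathbf{R}_g)$ of SE(3) and write the transformed atomic coordinates
\begin{equation*}
\vec{x}_{i}^{\text{\,pred}\,\prime} = \mathbf{R}_p \vec{x}_i^{\text{\,pred}} + \vec{t}_p,
\qquad
\vec{x}_{i}^{\text{\,gt}\,\prime} = \mathbf{R}_g \vec{x}_i^{\text{\,gt}} + \vec{t}_g.
\end{equation*}
Taking pairwise differences immediately eliminates the translations, giving $\vec{x}_i^{\text{\,pred}\,\prime}-\vec{x}_j^{\text{\,pred}\,\prime} = \mathbf{R}_p(\vec{x}_i^{\text{\,pred}}-\vec{x}_j^{\text{\,pred}})$ and the analogous identity on the ground-truth side. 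Then, using $\mathbf{R}_p^{\top}\mathbf{R}_p = \mathbf{E}_3$ and $\mathbf{R}_g^{\top}\mathbf{R}_g = \mathbf{E}_3$, the Euclidean norms $\lVert \vec{x}_i^{\text{\,pred}\,\prime}-\vec{x}_j^{\text{\,pred}\,\prime}\rVert$ and $\lVert \vec{x}_i^{\text{\,gt}\,\prime}-\vec{x}_j^{\text{\,gt}\,\prime}\rVert$ coincide with their untransformed counterparts. Substituting into the defining sum of $\text{PM}_{\text{atom}}^{2}$ in eq. \ref{eq:pm} yields exactly the same expression, proving invariance. The same two-line argument applies to $\text{PM}_{\text{center}}$, since its defining summands also involve only pairwise norms of centre-of-mass positions.

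I do not expect any serious obstacle: the proof is entirely a bookkeeping exercise on the two SE(3) parameters, and the only ``non-trivial'' fact used is $\lVert \mathbf{R}\vec{v}\rVert = \lVert \vec{v}\rVert$ for $\mathbf{R} \in \mathrm{SO}(3)$. The only subtlety worth flagging explicitly in the write-up is that the invariance holds \emph{independently} on each assembly, so the PM score is in fact invariant under the full product group $\mathrm{SE}(3)\times\mathrm{SE}(3)$ acting separately on prediction and target; this is what justifies its use as an ``absolute-frame-free'' assessment metric and what makes the bound $\text{PM}_{\text{atom}}\leq\sqrt{2}\,\text{RMSD}_{\text{atom}}$ from the previous theorem meaningful as a comparison, since RMSD does \emph{not} share this invariance.
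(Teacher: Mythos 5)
Your proof is correct and follows essentially the same route as the paper's: unpack the definition, observe that pairwise differences cancel the translation, and use orthogonality of the rotation to preserve each pairwise norm. The only cosmetic difference is that you apply independent transformations to both the prediction and the ground truth, whereas the paper applies one to a single argument — but the two statements are trivially equivalent given the symmetry of the argument, so the substance is unchanged.
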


\begin{proof}

Let us again consider
\begin{equation}
    \text{PM}^2 (x_{\text{pred}},  x_{\text{gt}}) = {\frac{1}{n_{\text{atom}}^2} \sum_{i \in n_{\text{atom}}} \sum_{j \in n_{\text{atom}}} 
    (||\vec x_{i,\text{pred}} - \vec x_{j,\text{pred}}}|| - ||\vec x_{i,\text{gt}} - \vec x_{j,\text{gt}}||)^2.
\end{equation}

This quantity is invariant up to any rigid transformation $\mathcal{T} = (\mathcal{R}, \vec{t})$ of one of its inputs. Indeed,

\begin{equation}
    \begin{split}
    \text{PM}^2_{\text{atom}}(\mathcal{T} \circ  x_{\text{pred}}, \vec x_{\text{gt}}) &= 
    {\frac{1}{n_{\text{atom}}^2} \sum_{i \in n_{\text{atom}}} \sum_{j \in n_{\text{atom}}} 
    (||\mathcal{T} \circ \vec x_{i,\text{pred}} - \mathcal{T} \circ \vec x_{j,\text{pred}}|| - ||\vec x_{i,\text{gt}} - \vec x_{j,\text{gt}}||)^2} \\
    &= {\frac{1}{n_{\text{atom}}^2} \sum_{i \in n_{\text{atom}}} \sum_{j \in n_{\text{atom}}} 
    (||\mathcal{R} \vec x_{i,\text{pred}} -\vec{t} - \mathcal{R} \vec x_{j,\text{pred}} + \vec{t}|| - ||\vec x_{i,\text{gt}} - \vec x_{j,\text{gt}}||)^2} \\
    &= {\frac{1}{n_{\text{atom}}^2} \sum_{i \in n_{\text{atom}}} \sum_{j \in n_{\text{atom}}} 
    (||\mathcal{R} (\vec x_{i,\text{pred}}  - \vec x_{j,\text{pred}})|| - ||\vec x_{i,\text{gt}} - \vec x_{j,\text{gt}}||)^2}\\
    &= {\frac{1}{n_{\text{atom}}^2} \sum_{i \in n_{\text{atom}}} \sum_{j \in n_{\text{atom}}} 
    (||\vec x_{i,\text{pred}}  - \vec x_{j,\text{pred}}|| - ||\vec x_{i,\text{gt}} - \vec x_{j,\text{gt}}||)^2}\\
    &= \text{PM}^2_{\text{atom}}(x^{\text{pred}}, x^{\text{gt}})
    \end{split}
\end{equation}
\end{proof}

\begin{theorem}
 Geometric loss is SE3-invariant.
\end{theorem}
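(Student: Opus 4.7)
The plan is to mirror the structure of the preceding $\text{PM}$ invariance proof: I will show that applying any global rigid transformation $\mathcal{G} \in \text{SE}(3)$ to one of the two inputs, say the predicted assembly, leaves $\mathcal{L}_{\text{Geom}}$ unchanged. By symmetry the same argument then applies to the ground-truth side, and combining the two covers the joint case as well. The key structural observation is that $\mathcal{L}_{\text{Geom}}$ was engineered to depend only on the intra-assembly relative transformations $\mathcal{T}_{M}^{-1} \circ \mathcal{T}_i$, so any global pose factor applied uniformly to one assembly ought to cancel out of every summand.

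First, I would unpack precisely what "applying $\mathcal{G}$ to the predicted assembly" means: every rigid-body transform $\mathcal{T}_{j,\text{pred}}$ is replaced by $\mathcal{G} \circ \mathcal{T}_{j,\text{pred}}$ for $j = 1, \ldots, M$, including the reference molecule $j = M$. Second, I would carry out the one-line cancellation inside the relative transformation entering each summand:
\begin{equation*}
(\mathcal{G} \circ \mathcal{T}_{M,\text{pred}})^{-1} \circ (\mathcal{G} \circ \mathcal{T}_{i,\text{pred}}) = \mathcal{T}_{M,\text{pred}}^{-1} \circ \mathcal{G}^{-1} \circ \mathcal{G} \circ \mathcal{T}_{i,\text{pred}} = \mathcal{T}_{M,\text{pred}}^{-1} \circ \mathcal{T}_{i,\text{pred}},
\end{equation*}
using only that $\text{SE}(3)$ is a group and that composition is associative. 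The right argument of each $\mathcal{L}_{\text{RMSD}}$ term is unchanged trivially since we did not touch the ground-truth side. Third, I would conclude: since each term in the average defining $\mathcal{L}_{\text{Geom}}$ is unchanged, so is the sum, and so is $\mathcal{L}_{\text{Geom}}$ itself. Swapping the roles of \emph{pred} and \emph{gt} gives invariance under a global transformation applied to the ground-truth assembly, and applying the two results in succession handles the case of two independent global transformations.

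There is essentially no obstacle here: the invariance is a direct algebraic consequence of having built the loss from left-quotients $\mathcal{T}_M^{-1} \circ \mathcal{T}_i$. The only subtlety worth flagging is that $\mathcal{G}$ need not commute with anything in $\text{SE}(3)$, and that is precisely why the cancellation works on the specific side it does — had the loss used the right-quotient $\mathcal{T}_i \circ \mathcal{T}_M^{-1}$ instead, one would have $\mathcal{G}$ on one end and $\mathcal{G}^{-1}$ on the other, with no cancellation. Thus the proof is almost purely notational, and the substantive content lies in the earlier design choice of the loss rather than in the verification itself.
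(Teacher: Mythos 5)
Your proof is correct and follows exactly the same route as the paper's: apply a global rigid transformation to every predicted transform including the reference, observe that $(\mathcal{G} \circ \mathcal{T}_{M})^{-1} \circ (\mathcal{G} \circ \mathcal{T}_{i}) = \mathcal{T}_{M}^{-1} \circ \mathcal{G}^{-1} \circ \mathcal{G} \circ \mathcal{T}_{i}$ cancels, and conclude term by term. The extra remarks about symmetry between pred/gt and about why a right-quotient would have failed are sound but not present in (nor needed by) the paper's argument.
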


\begin{proof}

We consider:
\begin{equation}
    \mathcal{L}_{\text{Geom}}\left(\mathcal{T}_{\text{pred}}, \mathcal{T}_{\text{gt}}\right) = \frac{1}{M-1} \sum_{i=1}^{M-1} 
   \mathcal{L}_{\text{RMSD}}(\mathcal{T}_{M,\text{pred}}^{-1}  \circ \mathcal{T}_{i,\text{pred}}
    ,
    \mathcal{T}_{M,\text{gt}}^{-1}  \circ \mathcal{T}_{i,\text{gt}}).
\end{equation}

This quantity is invariant up to any transformation $\mathcal{T}_{\text{noise}}$ of one of its inputs:
\begin{align*}
\mathcal{L}_{\text{Geom}}\left(\mathcal{T}_{\text{noise}} \circ \mathcal{T}_{\text{pred}}, \mathcal{T}_{\text{gt}}\right) &= \frac{1}{M-1} \sum_{i=1}^{M-1} \mathcal{L}_{\text{RMSD}}\Big( &(\mathcal{T}_{\text{noise}} \circ \mathcal{T}_{M,\text{pred}})^{-1}  \circ \mathcal{T}_{\text{noise}} \circ \mathcal{T}_{i,\text{pred}}, \\ 
& & \mathcal{T}_{M,\text{gt}}^{-1}  \circ \mathcal{T}_{i,\text{gt}}\Big)\\
&= \frac{1}{M-1} \sum_{i=1}^{M-1} \mathcal{L}_{\text{RMSD}}\Big( &\mathcal{T}_{M,\text{pred}}^{-1} \circ \mathcal{T}_{\text{noise}}^{-1}  \circ \mathcal{T}_{\text{noise}} \circ \mathcal{T}_{i,\text{pred}}, \\
& & \mathcal{T}_{M,\text{gt}}^{-1}  \circ \mathcal{T}_{i,\text{gt}}\Big) \\
&= \frac{1}{M-1} \sum_{i=1}^{M-1} \mathcal{L}_{\text{RMSD}}\Big( &\mathcal{T}_{M,\text{pred}}^{-1} \circ \mathcal{T}_{i,\text{pred}}, \mathcal{T}_{M,\text{gt}}^{-1}  \circ \mathcal{T}_{i,\text{gt}}\Big)\\
 &= \mathcal{L}_{\text{Geom}}\left(\mathcal{T}_{\text{pred}}, \mathcal{T}_{\text{gt}}\right)
\end{align*}
\end{proof}

\section{Method and implementation}
\label{app:method_implementation}
\subsection{Extension to the inversion dataset}
\label{app:inversion_dataset}
We argue that our method can also be applied to the inversion version of the dataset. Indeed this version, defined in \cite{liu2024equivariant}, presents half of the 17 molecules in each assembly as the left-handed and right-handed geometries of a chiral or achiral molecule. The latter molecules can interconvert during crystallization and thus, our permutation-invariant approach can be applied on this dataset. In the case of chiral molecules which can not interconvert during crystallization, the invariance to permutation can be adapted to the 2 subsets of left-handed and right handed geometries individually.

\subsection{AssembleFlow atom-level model}
\label{app:model}

We use the Atom-level implemented in AssembleFlow and which can be described in Algorithm \ref{alg:AssembleFlow_atom}. It is composed of a PaiNN embedding layer to encode each molecular structure individually followed by $N$ layers of atom-to-molecules attention message passing. Each molecule's transformation prediction is then obtained by aggregating the resulting atomic embeddings per molecule and passed through a projection head.

\begin{algorithm}
\pseudoset{
ctfont=\color{black!75},
ct-left= \hspace{0pt plus 1filll}, ct-right=,
}
\begin{pseudo*}[indent-mark]
{\color{ForestGreen} \textbf{def}} 
AtomModel($\{a_i\}: \text{atoms}, t: \text{time}, \{\vec{\mathbf{P}}_i^t\}: \text{positions}, N_{\text{layer}}=5, N_{\text{conv}}=5, c=128$)\\+
1: $t = \text{Linear}(\text{SiLU}(\text{Linear}(\text{time_embed}(t))))$  
\ct{[c]} \\
2: $\{h_i^t\} = \text{PaiNN}(\{a_i\}, \{\vec{\mathbf{P}}_i^t\}) + \text{Linear}(\text{SiLU}(t))$ 
\ct{[$N_{\text{atom}}, c$]}\\
3: $\{s_i^t\} = \text{ScatterMean}_{\text{per mol}}(\{h_i^t\})$ 
\ct{[$N_{\text{mol}}, c$]}\\
4: $\{\vec{\mathbf{X}}_i^t\} = \text{ScatterMean}_{\text{per mol}}(\{\vec{\mathbf{P}}_i^t\})$  
\ct{[$N_{\text{mol}}, 3$]}\\
5: $\{e_{ij}^t\} = \text{RadialGraph}(\{\vec{\mathbf{P}}_i^t\}, \{\vec{\mathbf{X}}_i^t\})$  
\ct{Atom to Molecules edges}\\
6: {\color{ForestGreen} \textbf{for all}} $\{i,j\} / e_{ij}^t=1:$\\
7: \quad$\Delta_{ij}^t = \vec{\mathbf{P}}_i^t - \vec{\mathbf{X}}_j^t / \|\vec{\mathbf{P}}_i^t - \vec{\mathbf{X}}_j^t\|$\\
8: \quad$\chi_{ij}^t = \vec{\mathbf{P}}_i^t \times \vec{\mathbf{X}}_j^t / \|\vec{\mathbf{P}}_i^t \times \vec{\mathbf{X}}_j^t\|$\\
9: \quad$\Lambda_{ij}^t = \Delta_{ij}^t \times \chi_{ij}^t$\\
10:\quad$\text{Base}_{ij}^t = \text{concat}(\Delta_{ij}^t, \chi_{ij}^t, \Lambda_{ij}^t)$ \hfill \ct{[Edges, 3, 3]}\\
11:\quad$\textbf{E}_i^t = \text{MLP}(\text{GaussianFourierEmbed}(\text{Base}_{ij}^t \cdot \vec{\mathbf{P}}_i^t$)) \hfill \ct{[Edges, c]}\\
12:\quad$\textbf{E}_j^t = \text{MLP}(\text{GaussianFourierEmbed}(\text{Base}_{ij}^t \cdot \vec{\mathbf{X}}_j^t$)) \hfill \ct{[Edges, c]}\\
13:\quad$\{\textbf{z}_{ij}^t\} = \text{MLP}(\text{concat}(\textbf{E}_i^t, \textbf{E}_j^t))$ \hfill \ct{[Edges, c]} \\
14:\quad$\mathcal{R}_{i}^t = \mathbf{0}$ and $\mathcal{S}_i^t = \mathbf{0}$\\
15:\quad{\color{ForestGreen}\textbf{for all}} $l \in [1,...,N_{\text{layer}}]$:\\
16:\quad {\color{ForestGreen}\textbf{for all}} $f \in [1,...,N_{\text{conv}}]$:\\
17:\quad\quad $\{\Tilde{\textbf{h}}_i^t\} = \text{Transformer}_{\text{conv}}^f(\{\textbf{h}_i^t\}, \{\textbf{s}_j^t\}, \{\textbf{z}_{ij}^t\})$\\
18:\quad\quad $\{\textbf{h}_i^t\} = \{\textbf{h}_i^t\} + \text{LayerNorm}(\{\Tilde{\textbf{h}}_i^t\})$\\
19:\quad\quad $\{\Tilde{\textbf{h}}_i^t\} = \text{FFN}^f(\{\textbf{h}_i^t\})$\\
20:\quad\quad $\{\textbf{h}_i^t\} = \{\textbf{h}_i^t\}+ \text{LayerNorm}(\{\Tilde{\textbf{h}}_i^t\}) + \text{Linear}(\text{SiLU}(t))$\\
21:\quad\quad {\color{ForestGreen}\textbf{if}} $l < N_{\text{conv}}$ :\\
22:\quad\quad\quad $\{\textbf{h}_i^t\} = \text{SiLU}(\{\textbf{h}_i^t\})$\\
23:\quad\quad {\color{ForestGreen}\textbf{end if}}\\
24:\quad {\color{ForestGreen}\textbf{end for}}\\
25:\quad $\{\textbf{s}_i^t\} = \text{ScatterMean}_{\text{per mol}}(\{h_i^t\})$\\
26:\quad $\mathcal{R}_{i}^t \gets \mathcal{R}_{i}^t +  \text{ScatterMean}_{\text{per mol}}\Big($\\
\quad\quad\quad\quad\quad\quad\quad\quad$\sum_{j \in \mathcal{N}(i)} \text{MLP}(\text{concat}(h_i^t + s_j^t, \textbf{z}_{ij}^t)) \cdot \text{Base}_{ij}^t\Big)$ \hfill \ct{[$N_{mol}$, 3]} \\
27:\quad $\mathcal{S}_{i}^t \gets \mathcal{S}_{i}^t +  \text{ScatterMean}_{\text{per mol}}\Big($\\
\quad\quad\quad\quad\quad\quad\quad\quad$\sum_{j \in \mathcal{N}(i)} \text{Proj}\left(\text{Linear}\left(\text{MLP}(\text{concat}(h_i^t + s_j^t, \textbf{z}_{ij}^t)) \cdot \text{Base}_{ij}^t\right)\right)\Big)$ \hfill \ct{[$N_{mol}$, 4]}\\
28: {\color{ForestGreen}\textbf{end for}}\\
29: {\color{ForestGreen} \textbf{return}} $\{\mathcal{S}_i^t, \mathcal{R}_i^t\}$\\
\end{pseudo*}
\caption{Atom-level model.}
\label{alg:AssembleFlow_atom}
\end{algorithm}

\subsection{Implementation details}

\subsubsection{Hyperparameters and number of parameters}
\label{app:hp}

Table \ref{tab:hp} lists the hyperparameters used during training along with the number of parameters for the model and the memory usage.

\setlength{\tabcolsep}{0.4em}
\begin{table*}[htbp!]
\caption{\textbf{Hyperparameters used in the model.}}
\label{tab:hp}
\centering
\scalebox{1}{
\begin{tabular}{llr}
\toprule[\thick pt]
\textbf{Model part} & \textbf{Function} & \textbf{Parameters}\\
\midrule[\thick pt]
Training & Epochs & \{500\} \\
 & Batch size & \{8\} \\
\cmidrule(r{5pt}l{5pt}){2-3}
 & Loss & \{LM: \{alpha: 10\}\} \\
 &  & \{RMSD: $\emptyset$ \} \\
 &  & \{Geometric: $\emptyset$ \} \\
\cmidrule(r{5pt}l{5pt}){2-3}
 & Assignment & \{None: $\emptyset$\} \\
 &  & \{'Exact': $\emptyset$\} \\
 &  & \makecell{\{'Differentiable': \\ \{reg=$5.10^{-2}$.median\_score\}\}} \\
\cmidrule(r{5pt}l{5pt}){1-3}
Optimizer & Name & \{Adam\} \\
 & Learning rate & \{$10^{-4}$\} \\
 & Weight decay & \{0\} \\
 & Scheduler & \{'CosineAnnealingLR'\} \\
\cmidrule(r{5pt}l{5pt}){1-3}
Molecular Encoder & cutoff & \{5\} \\
(PaiNN) & embedding dim & \{128\} \\
 & number of interactions & \{3\} \\
 & number of rbf & \{20\} \\
 & scatter & \{'mean'\} \\
 & gamma & \{3.25\} \\
\cmidrule(r{5pt}l{5pt}){1-3}
Backbone & emb_dim & \{128\} \\
(AssembleFlow Atom) & hidden dim & \{128\} \\
 & cutoff & \{10\} \\
 & cluster cutoff & \{50\} \\
 & number of timesteps & \{1, 50\} \\
 & scatter & \{'mean'\} \\
 & number of Gaussians & \{20\} \\
 & number of heads & \{8\} \\
 & number of layers & \{5\} \\
 & number of convolutions & \{5\} \\
 & gamma & \{3.25\} \\
\midrule[\thick pt]
\textbf{Total number of parameters:} & 4 292 718 &  \\
\midrule[\thick pt]
\textbf{Total memory usage:} & 38.9 GB & \\
\bottomrule[\thick pt]
\end{tabular}
}
\end{table*}

\subsubsection{Licenses and versions}

The common environment packages are released with the code through a conda environment. We also report in Table \ref{tab:licenses} the versions and licenses of the main packages used.

\setlength{\tabcolsep}{0.4em}
\begin{table*}[htbp!]
\caption{\textbf{Versions and licenses.}}
\centering
\scalebox{1}{
\begin{tabular}{llr}
\toprule[\thick pt]
\textbf{Package} & \textbf{Version} & \textbf{License}\\
\midrule[\thick pt]
COD-Cluster17 & \href{https://huggingface.co/datasets/chao1224/CrystalFlow/commit/bc9d9b091e63e8c355b2420439804475dcfd9c56}{git commit} & MIT \\
\cmidrule(r{5pt}l{5pt}){1-3}
AssembleFlow Model & \href{https://github.com/chao1224/AssembleFlow/commit/92af6cd8d7d1884a03328ec902d39d970d2ab5a6}{git commit} & MIT \\
\cmidrule(r{5pt}l{5pt}){1-3}
POT & 0.9.5 & MIT \\
\cmidrule(r{5pt}l{5pt}){1-3}
RMSD & - & CeCILL \\
\bottomrule[\thick pt]
\end{tabular}
}
\label{tab:licenses}
\end{table*}

\section{Ablation studies}
\label{app:results}

\subsection{Differential assignment with direct regression}

In Table \ref{tab:loss_train_sinkhorn_DR}, we list the experiments of training or not with differential assignment in direct regression with the AssembleFlow atom-level model. We want to draw the attention to the $\text{PM}^{*}$ methods and the great added value of using our assignment method regardless of the loss being used.

\setlength{\tabcolsep}{0.4em}
\begin{table*}[htbp!]
\caption{Ablation study of using differentiable assignment (Diff. assign.) losses during training on COD-Cluster17 with direct regression.}
\centering
\scalebox{1}{
\begin{tabular}{lccccccc}
\toprule[\thick pt]
\multicolumn{2}{c}{} & \multicolumn{2}{c}{\textbf{Test Loss} in \AA $\downarrow$} & \multicolumn{4}{c}{\textbf{Packing matching} in \AA $\downarrow$}\\
\cmidrule[\thick pt](r{5pt}l{5pt}){3-4} \cmidrule[\thick pt](r{5pt}l{5pt}){5-8}
\textbf{Loss} & \textbf{\makecell{Diff. \\ assign.}} & 
$\mathcal{L}_{\text{RMSD}}^{*}$ & $\mathcal{L}_{\text{Geom}}^{*}$ & $\textbf{PM}_{\textbf{center}}^{*}$ & $\textbf{PM}_{\textbf{atom}}^{*}$ & $\textbf{PM}_{\textbf{center}}$ & $\textbf{PM}_{\textbf{atom}}$ \\
\midrule[\thick pt]
\multicolumn{8}{c}{Dataset: \textbf{COD-Cluster17-5K}}\\
\midrule[\thick pt]
$\mathcal{L}_{\text{ML}}$ & & $9.64_{\pm 0.21}$ & $11.43_{\pm 0.08}$ & $5.62_{\pm 0.31}$ & $6.68_{\pm 0.24}$ & $6.97_{\pm 0.23}$ & $7.62_{\pm 0.18}$ \\
$\mathcal{L}_{\text{RMSD}}$ & & $9.64_{\pm 0.03}$ & $11.24_{\pm 0.15}$ & $5.57_{\pm 0.19}$ & $6.67_{\pm 0.07}$ & $6.93_{\pm 0.12}$ & $7.61_{\pm 0.02}$ \\
$\mathcal{L}_{\text{Geom}}$ & & $10.10_{\pm 0.14}$ & $10.05_{\pm 0.11}$ & $8.44_{\pm 0.43}$ & $8.37_{\pm 0.26}$ & $9.05_{\pm 0.37}$ & $8.74_{\pm 0.22}$ \\
\cmidrule(r{5pt}l{5pt}){1-8}
$\mathcal{L}_{\text{ML}}^{*}$ & \checkmark & $8.69_{\pm 0.06}$ & $12.16_{\pm 0.12}$ & $3.60_{\pm 0.04}$ & $5.54_{\pm 0.04}$ & $5.80_{\pm 0.03}$ & $6.96_{\pm 0.03}$ \\
$\mathcal{L}_{\text{RMSD}}^{*}$ & \checkmark & $8.73_{\pm 0.07}$ & $12.05_{\pm 0.15}$ & $3.77_{\pm 0.12}$ & $5.67_{\pm 0.08}$ & $5.85_{\pm 0.05}$ & $6.98_{\pm 0.05}$ \\
$\mathcal{L}_{\text{Geom}}^{*}$ & \checkmark & $9.32_{\pm 0.06}$ & $8.78_{\pm 0.05}$ & $5.55_{\pm 0.15}$ & $6.54_{\pm 0.07}$ & $6.92_{\pm 0.07}$ & $7.46_{\pm 0.02}$ \\
\midrule[\thick pt]
\multicolumn{8}{c}{Dataset: \textbf{COD-Cluster17-All}}\\
\midrule[\thick pt]
$\mathcal{L}_{\text{ML}}$ & & $11.67_{\pm 0.07}$ & $11.33_{\pm 0.05}$ & $12.94_{\pm 0.16}$ & $10.47_{\pm 0.03}$ & $13.03_{\pm 0.15}$ & $10.47_{\pm 0.02}$ \\
$\mathcal{L}_{\text{RMSD}}$ & & $11.58_{\pm 0.04}$ & $11.20_{\pm 0.12}$ & $12.98_{\pm 0.13}$ & $10.44_{\pm 0.01}$ & $13.07_{\pm 0.12}$ & $10.43_{\pm 0.01}$ \\
$\mathcal{L}_{\text{Geom}}$ & & $11.90_{\pm 0.08}$ & $11.38_{\pm 0.09}$ & $13.62_{\pm 0.07}$ & $10.52_{\pm 0.01}$ & $13.66_{\pm 0.06}$ & $10.49_{\pm 0.01}$ \\
\cmidrule(r{5pt}l{5pt}){1-8}
$\mathcal{L}_{\text{ML}}^{*}$ & \checkmark & $8.65_{\pm 0.02}$ & $12.10_{\pm 0.10}$ & $3.47_{\pm 0.04}$ & $5.51_{\pm 0.02}$ & $5.80_{\pm 0.00}$ & $7.00_{\pm 0.01}$ \\
$\mathcal{L}_{\text{RMSD}}^{*}$ & \checkmark &  $8.70_{\pm 0.03}$ & $12.16_{\pm 0.08}$ & $3.41_{\pm 0.04}$ & $5.54_{\pm 0.01}$ & $5.80_{\pm 0.00}$ & $7.00_{\pm 0.01}$ \\
$\mathcal{L}_{\text{Geom}}^{*}$ & \checkmark & $9.35_{\pm 0.00}$ & $8.71_{\pm 0.03}$ & $5.43_{\pm 0.10}$ & $6.52_{\pm 0.05}$ & $6.84_{\pm 0.06}$ & $7.45_{\pm 0.02}$ \\
\bottomrule[\thick pt]
\end{tabular}
}
\label{tab:loss_train_sinkhorn_DR}
\end{table*}

\subsection{Differential assignment with flow matching}

Table \ref{tab:loss_train_sinkhorn_FM} lists the experiments of switching on and off the expensive flow matching framework (table \ref{tab:exec_time}) along with using the differential assignment. 
The added value of flow matching when using the differential assignment loss is not very clear in the current framework. As it does not always significantly help the method, we suspect a need to further adapt the assignment method to the iterative flow matching scheme. However, we would like to point out two things. Firstly, it greatly improves the performance of the \textit{relative} geometric method on the \textit{absolute} metrics while decreasing it on the \textit{relative} metric. Secondly, it enable to reach the overall best performance in the $\textbf{PM}_{\textbf{center}}^{*}$ metric.

\setlength{\tabcolsep}{0.4em}
\begin{table*}[htbp!]
\caption{
Ablation study of using flow matching in addition to differentiable assignment losses during training on COD-Cluster17.}
\centering
\scalebox{1}{
\begin{tabular}{lccccccc}
\toprule[\thick pt]
\multicolumn{2}{c}{} & \multicolumn{2}{c}{\textbf{Test Loss} in \AA $\downarrow$} & \multicolumn{4}{c}{\textbf{Packing matching} in \AA $\downarrow$}\\
\cmidrule[\thick pt](r{5pt}l{5pt}){3-4} \cmidrule[\thick pt](r{5pt}l{5pt}){5-8}
\textbf{Loss} & \textbf{\makecell{Flow \\ Matching}} & 
$\mathcal{L}_{\text{RMSD}}^{*}$ & $\mathcal{L}_{\text{Geom}}^{*}$ & $\textbf{PM}_{\textbf{center}}^{*}$ & $\textbf{PM}_{\textbf{atom}}^{*}$ & $\textbf{PM}_{\textbf{center}}$ & $\textbf{PM}_{\textbf{atom}}$ \\
\midrule[\thick pt]
\multicolumn{8}{c}{Dataset: \textbf{COD-Cluster17-5K}}\\
\midrule[\thick pt]
$\mathcal{L}_{\text{ML}}^{*}$ &  & $8.69_{\pm 0.06}$ & $12.16_{\pm 0.12}$ & $3.60_{\pm 0.04}$ & $5.54_{\pm 0.04}$ & $5.80_{\pm 0.03}$ & $6.96_{\pm 0.03}$ \\
$\mathcal{L}_{\text{RMSD}}^{*}$ &  & $8.73_{\pm 0.07}$ & $12.05_{\pm 0.15}$ & $3.77_{\pm 0.12}$ & $5.67_{\pm 0.08}$ & $5.85_{\pm 0.05}$ & $6.98_{\pm 0.05}$ \\
$\mathcal{L}_{\text{Geom}}^{*}$ &  & $9.32_{\pm 0.06}$ & $8.78_{\pm 0.05}$ & $5.55_{\pm 0.15}$ & $6.54_{\pm 0.07}$ & $6.92_{\pm 0.07}$ & $7.46_{\pm 0.02}$ \\
\cmidrule(r{5pt}l{5pt}){1-8}
$\mathcal{L}_{\text{ML}}^{*}$ & \checkmark & $9.31_{\pm 0.25}$ & $13.54_{\pm 0.50}$ & $3.48_{\pm 0.19}$ & $5.60_{\pm 0.14}$ & $6.12_{\pm 0.23}$ & $7.29_{\pm 0.21}$ \\
$\mathcal{L}_{\text{RMSD}}^{*}$ & \checkmark  & $9.53_{\pm 0.54}$ & $13.71_{\pm 0.40}$ & $3.43_{\pm 0.20}$ & $5.56_{\pm 0.14}$ & $6.12_{\pm 0.19}$ & $7.28_{\pm 0.17}$ \\
$\mathcal{L}_{\text{Geom}}^{*}$ & \checkmark  & $9.09_{\pm 0.09}$ & $10.48_{\pm 0.18}$ & $3.72_{\pm 0.11}$ & $5.73_{\pm 0.04}$ & $6.04_{\pm 0.10}$ & $7.19_{\pm 0.05}$ \\
\midrule[\thick pt]
\multicolumn{8}{c}{Dataset: \textbf{COD-Cluster17-All}}\\
\midrule[\thick pt]
$\mathcal{L}_{\text{ML}}^{*}$ &  & $8.65_{\pm 0.02}$ & $12.10_{\pm 0.10}$ & $3.47_{\pm 0.04}$ & $5.51_{\pm 0.02}$ & $5.80_{\pm 0.00}$ & $7.00_{\pm 0.01}$ \\
$\mathcal{L}_{\text{RMSD}}^{*}$ &  &  $8.70_{\pm 0.03}$ & $12.16_{\pm 0.08}$ & $3.41_{\pm 0.04}$ & $5.54_{\pm 0.01}$ & $5.80_{\pm 0.00}$ & $7.00_{\pm 0.01}$ \\
$\mathcal{L}_{\text{Geom}}^{*}$ &  & $9.35_{\pm 0.00}$ & $8.71_{\pm 0.03}$ & $5.43_{\pm 0.10}$ & $6.52_{\pm 0.05}$ & $6.84_{\pm 0.06}$ & $7.45_{\pm 0.02}$ \\
\cmidrule(r{5pt}l{5pt}){1-8}
$\mathcal{L}_{\text{ML}}^{*}$ & \checkmark & $9.37_{\pm 0.09}$ & $13.69_{\pm 0.21}$ & $3.42_{\pm 0.12}$ & $5.63_{\pm 0.07}$ & $6.15_{\pm 0.12}$ & $7.36_{\pm 0.09}$ \\
$\mathcal{L}_{\text{RMSD}}^{*}$ & \checkmark & $9.51_{\pm 0.38}$ & $13.42_{\pm 0.22}$ & $3.29_{\pm 0.04}$ & $5.53_{\pm 0.04}$ & $6.01_{\pm 0.06}$ & $7.23_{\pm 0.07}$ \\
$\mathcal{L}_{\text{Geom}}^{*}$ & \checkmark & $9.28_{\pm 0.09}$ & $10.72_{\pm 0.13}$ & $3.89_{\pm 0.23}$ & $5.88_{\pm 0.12}$ & $6.27_{\pm 0.17}$ & $7.40_{\pm 0.12}$ \\
\bottomrule[\thick pt]
\end{tabular}
}
\label{tab:loss_train_sinkhorn_FM}
\end{table*}

\subsection{Using linear sum assignment during training against differentiable assignment}

We report in Table \ref{tab:loss_train_lSA_Sinkhorn} the experiment of using the linear sum assignment (\textit{exact}) during training against the differential assignment (\textit{relaxed}).
On the one hand, using the exact solver during training enables backpropagation for each molecule in the assembly along the path leading to its assigned target, while killing the other gradients corresponding to other paths to unassigned targets. On the other hand, the relaxed differential version preserves the gradients to all possible paths with probability attached to each, which enables a more diverse learning.
While being suboptimal compared to the differential assignment, the added value of using the latter is very small as shown in Table \ref{tab:loss_train_lSA_Sinkhorn}. We report here the performance obtained without tuning the regularization parameter of the Sinkhorn algorithm and exploring its influence on the overall performance. 
Nonetheless we argue that this hyperparameter should should play an important role with better-performing methods in the future. Indeed we believe that if the method learned nearly perfectly to match a molecule to its target position, this relaxed method would diversify the search space and act as a data augmentation method, the amount of which would be set by the regularization parameter.
    
\setlength{\tabcolsep}{0.4em}
\begin{table*}[htbp!]
\caption{Ablation study of using differential or exact assignment losses during training on COD-Cluster17 with direct regression.}
\centering
\scalebox{0.87}{
\begin{tabular}{lccccccc}
\toprule[\thick pt]
\multicolumn{2}{c}{} & \multicolumn{2}{c}{\textbf{Test Loss} in \AA $\downarrow$} & \multicolumn{4}{c}{\textbf{Packing matching} in \AA $\downarrow$}\\
\cmidrule[\thick pt](r{5pt}l{5pt}){3-4} \cmidrule[\thick pt](r{5pt}l{5pt}){5-8}
\textbf{Loss} & \textbf{\makecell{Assignment \\ type}} & 
$\mathcal{L}_{\text{RMSD}}^{*}$ & $\mathcal{L}_{\text{Geom}}^{*}$ & $\textbf{PM}_{\textbf{center}}^{*}$ & $\textbf{PM}_{\textbf{atom}}^{*}$ & $\textbf{PM}_{\textbf{center}}$ & $\textbf{PM}_{\textbf{atom}}$ \\
\midrule[\thick pt]
\multicolumn{8}{c}{Dataset: \textbf{COD-Cluster17-5K}}\\
\midrule[\thick pt]
$\mathcal{L}_{\text{ML}}^{*}$ & Exact & $8.70_{\pm 0.06}$ & $12.24_{\pm 0.14}$ & $3.64_{\pm 0.12}$ & $5.56_{\pm 0.08}$ & $5.81_{\pm 0.04}$ & $6.96_{\pm 0.04}$ \\
$\mathcal{L}_{\text{RMSD}}^{*}$ & Exact & $8.72_{\pm 0.07}$ & $12.19_{\pm 0.05}$ & $3.65_{\pm 0.05}$ & $5.61_{\pm 0.03}$ & $5.81_{\pm 0.02}$ & $6.96_{\pm 0.04}$ \\
$\mathcal{L}_{\text{Geom}}^{*}$ & Exact & $9.32_{\pm 0.05}$ & $8.80_{\pm 0.08}$ & $5.51_{\pm 0.25}$ & $6.53_{\pm 0.14}$ & $6.90_{\pm 0.14}$ & $7.45_{\pm 0.06}$ \\
\cmidrule(r{5pt}l{5pt}){1-8}
$\mathcal{L}_{\text{ML}}^{*}$ & Diff. & $8.69_{\pm 0.06}$ & $12.16_{\pm 0.12}$ & $3.60_{\pm 0.04}$ & $5.54_{\pm 0.04}$ & $5.80_{\pm 0.03}$ & $6.96_{\pm 0.03}$ \\
$\mathcal{L}_{\text{RMSD}}^{*}$ & Diff. & $8.73_{\pm 0.07}$ & $12.05_{\pm 0.15}$ & $3.77_{\pm 0.12}$ & $5.67_{\pm 0.08}$ & $5.85_{\pm 0.05}$ & $6.98_{\pm 0.05}$ \\
$\mathcal{L}_{\text{Geom}}^{*}$ & Diff. & $9.32_{\pm 0.06}$ & $8.78_{\pm 0.05}$ & $5.55_{\pm 0.15}$ & $6.54_{\pm 0.07}$ & $6.92_{\pm 0.07}$ & $7.46_{\pm 0.02}$ \\
\midrule[\thick pt]
\multicolumn{8}{c}{Dataset: \textbf{COD-Cluster17-All}}\\
\midrule[\thick pt]
$\mathcal{L}_{\text{ML}}^{*}$ & Exact & $8.65_{\pm 0.02}$ & $12.18_{\pm 0.02}$ & $3.37_{\pm 0.03}$ & $5.47_{\pm 0.02}$ & $5.78_{\pm 0.01}$ & $6.99_{\pm 0.01}$ \\
$\mathcal{L}_{\text{RMSD}}^{*}$ & Exact & $8.70_{\pm 0.03}$ & $12.14_{\pm 0.09}$ & $3.44_{\pm 0.09}$ & $5.56_{\pm 0.03}$ & $5.80_{\pm 0.01}$ & $7.00_{\pm 0.01}$ \\
$\mathcal{L}_{\text{Geom}}^{*}$ & Exact & $9.35_{\pm 0.03}$ & $8.71_{\pm 0.03}$ & $5.40_{\pm 0.07}$ & $6.51_{\pm 0.05}$ & $6.84_{\pm 0.05}$ & $7.46_{\pm 0.03}$ \\
\cmidrule(r{5pt}l{5pt}){1-8}
$\mathcal{L}_{\text{ML}}^{*}$ & Diff. & $8.65_{\pm 0.02}$ & $12.10_{\pm 0.10}$ & $3.47_{\pm 0.04}$ & $5.51_{\pm 0.02}$ & $5.80_{\pm 0.00}$ & $7.00_{\pm 0.01}$ \\
$\mathcal{L}_{\text{RMSD}}^{*}$ & Diff. &  $8.70_{\pm 0.03}$ & $12.16_{\pm 0.08}$ & $3.41_{\pm 0.04}$ & $5.54_{\pm 0.01}$ & $5.80_{\pm 0.00}$ & $7.00_{\pm 0.01}$ \\
$\mathcal{L}_{\text{Geom}}^{*}$ & Diff. & $9.35_{\pm 0.00}$ & $8.71_{\pm 0.03}$ & $5.43_{\pm 0.10}$ & $6.52_{\pm 0.05}$ & $6.84_{\pm 0.06}$ & $7.45_{\pm 0.02}$ \\
\bottomrule[\thick pt]
\end{tabular}
}
\label{tab:loss_train_lSA_Sinkhorn}
\end{table*}

\subsection{Angular VS translational prediction}
\label{sec:angular_prediction}

We report in table \ref{tab:loss_RMSD_Tran_Rot} the decomposition of the RMSD score in both its translation $\mathcal{L}_{\mathbb{R}^3}^{*}$ and rotation $\mathcal{L}_{\text{SO}(3)}^{*}$ parts.
Please note that ${\mathcal{L}_{\text{RMSD}}^{*}}^2 = {\mathcal{L}_{\mathbb{R}^3}^{*}}^2 + {\mathcal{L}_{\text{SO}(3)}^{*}}^2$, following eq. \ref{eq:RMSDsimple}.
The noise baseline is computed by always using an identity transformation as a prediction, meaning, a zero translation and an identity rotation,
and computing the RMSD between the sets $\mathcal{S}_{\text{initial}}$ of initial positions and $\mathcal{S}_{\text{final}}$ of final positions. 
Presented results indicate
the scale of the problem and show in particular that initial orientations are better than predicted ones. 
This table shows that both models mainly focus on the translation part of the problem, while discarding rotations completely.

\setlength{\tabcolsep}{0.4em}
\begin{table*}[htbp!]
\caption{Both AssembleFlow and SinkFast RMSD performance decomposed between translation and rotation on COD-Cluster17-5K. Baseline scores indicate the scale of the metric and are computed between $\mathcal{S}_{\text{initial}}$ and $\mathcal{S}_{\text{final}}$ as if the model 
always predicts identity transformations.
}
\centering
\scalebox{0.87}{
\begin{tabular}{llccc}
\toprule[\thick pt]
\multicolumn{2}{c}{} & \multicolumn{3}{c}{\textbf{Test Loss} in \AA $\downarrow$} \\
\cmidrule[\thick pt](r{5pt}l{5pt}){3-5}
\textbf{Model} & \textbf{Loss} &  
$\mathcal{L}_{\text{RMSD}}^{*}$ & $\mathcal{L}_{\mathbb{R}^3}^{*}$ & $\mathcal{L}_{\text{SO}(3)}^{*}$ \\
\midrule[\thick pt]
\multicolumn{5}{c}{Dataset: \textbf{COD-Cluster17-5K}}\\
\midrule[\thick pt]
Noise (Baseline) &  & $12.83_{\pm 0.05}$ & $11.55_{\pm 0.02}$ & $5.42_{\pm 0.06}$ \\
\cmidrule(r{5pt}l{5pt}){1-5}
 & $\mathcal{L}_{\text{ML}}$ & $9.53_{\pm 0.09}$ & $7.60_{\pm 0.09}$ & $5.65_{\pm 0.06}$ \\
AssembleFlow & $\mathcal{L}_{\text{RMSD}}$ & $9.43_{\pm 0.23}$ & $7.47_{\pm 0.21}$ & $5.66_{\pm 0.10}$ \\
 & $\mathcal{L}_{\text{Geom}}$ & $9.12_{\pm 0.05}$ & $7.10_{\pm 0.07}$ & $5.65_{\pm 0.14}$ \\
\cmidrule(r{5pt}l{5pt}){1-5}
 & $\mathcal{L}_{\text{ML}}^{*}$ & $8.90_{\pm 0.11}$ & $6.63_{\pm 0.06}$ & $5.87_{\pm 0.09}$ \\
SinkFast & $\mathcal{L}_{\text{RMSD}}^{*}$ & $8.86_{\pm 0.09}$ & $6.66_{\pm 0.06}$ & $5.77_{\pm 0.07}$ \\
 & $\mathcal{L}_{\text{Geom}}^{*}$ & $9.33_{\pm 0.11}$ & $7.49_{\pm 0.09}$ & $5.50_{\pm 0.07}$ \\
\bottomrule[\thick pt]
\end{tabular}
}
\label{tab:loss_RMSD_Tran_Rot}
\end{table*}

\section{Additional experiments}

\subsection{Comparison to inorganic-based methods}
\label{app:inorganic_models}

Inorganic crystal structure prediction is a fast-moving domain in which many state of the art models compete and innovate. We here want to compare the performance of current organic state of the art to the inorganic one. Thus, we conduct experiments on the COD-Cluster17-5k dataset by retraining both CDVAE \citep{xie2022crystaldiffusionvariationalautoencoder} and DiffCSP \citep{jiao2024crystalstructurepredictionjoint} models. In both cases, the models are trained to predict the target set of atomic positions from a noise distribution, where the same atoms are randomly positioned in space. Both methods operate in fractional coordinates and require a lattice definition. However, since the COD-Cluster17 dataset provides only point clouds without explicit lattice parameters or periodic boundary conditions, we define a pseudo lattice as the bounding box that encompasses all sets of molecules. Atom positions are then expressed in fractional coordinates relative to this pseudo lattice.

This setup introduces a stringent constraint that is not optimal for symmetry-based algorithms like CDVAE and DiffCSP, as we do not supply accurate information about atomic density or minimal symmetry groups. Despite this, both methods were able to produce high-quality predictions in certain cases. Notably, their performance did not show a strong correlation with the number of atoms per ASU.

At inference, we sample from the learned distribution of atomic positions rather than using initial positions provided by COD-Cluster17. As shown in Table \ref{tab:cdvae_mean}, both CDVAE and DiffCSP underperform significantly compared to rigid-body-based AssembleFlow and SinkFast methods, indicating that these point cloud models are not well suited to this task out-of-the-box. In Tables \ref{tab:cdvae_small} and \ref{tab:cdvae_big} we explore whether these methods perform particularly well on small graphs, but this tendency is actually also shared by both AssembleFlow and SinkFast.

\setlength{\tabcolsep}{0.8em}
\begin{table*}[htbp!]
\caption{Performance in \AA ($\downarrow$) of our proposed SinkFast and AssembleFlow rigid-body methods against inorganic crystal structure prediction models CDVAE and DiffCSP on COD_Cluster17 - 5k test set.}
\centering
\scalebox{0.97}{
\begin{tabular}{lcc}
\toprule[\thick pt]
\textbf{Method} & $\textbf{PM}_{\textbf{center}}^{*}$ & $\textbf{PM}_{\textbf{atom}}^{*}$ \\
\midrule[\thick pt]
CDVAE & $10.50_{\pm 0.52}$ & $14.81_{\pm 0.89}$ \\
DiffCSP & $23.50_{\pm 2.44}$ & $30.61_{\pm 2.53}$ \\
AssembleFlow & $3.76_{\pm 0.00}$ & $5.73_{\pm 0.02}$ \\
SinkFast & $3.60_{\pm 0.04}$ & $5.54_{\pm 0.04}$ \\
\bottomrule[\thick pt]
\end{tabular}
}
\label{tab:cdvae_mean}
\end{table*}

\setlength{\tabcolsep}{0.8em}
\begin{table*}[htbp!]
\caption{Performance in \AA ($\downarrow$) of our proposed SinkFast and AssembleFlow rigid-body methods against inorganic crystal structure prediction models CDVAE and DiffCSP on COD_Cluster17 - 5k test set filtered on $n_{\text{atom}}\leq16$ corresponding to the 20 smallest graphs.}
\centering
\scalebox{0.97}{
\begin{tabular}{lrr}
\toprule[\thick pt]
\textbf{Method} & $\textbf{PM}_{\textbf{center}}^{*}$ & $\textbf{PM}_{\textbf{atom}}^{*}$ \\
\midrule[\thick pt]
CDVAE & $8.17_{\pm 0.07}$ & $12.34_{\pm 0.91}$ \\
DiffCSP & $19.74_{\pm 0.42}$ & $25.89_{\pm 0.48}$ \\
AssembleFlow & $2.58_{\pm 0.19}$ & $3.49_{\pm 0.19}$ \\
SinkFast & $2.60_{\pm 0.04}$ & $3.48_{\pm 0.11}$ \\
\bottomrule[\thick pt]
\end{tabular}
}
\label{tab:cdvae_small}
\end{table*}

\setlength{\tabcolsep}{0.8em}
\begin{table*}[htbp!]
\caption{Performance in \AA ($\downarrow$) of our proposed SinkFast and AssembleFlow rigid-body methods against inorganic crystal structure prediction models CDVAE and DiffCSP on COD_Cluster17 - 5k test set filtered on $n_{\text{atom}}\leq50$ corresponding to half of the dataset.}
\centering
\scalebox{0.97}{
\begin{tabular}{lcc}
\toprule[\thick pt]
\textbf{Method} & $\textbf{PM}_{\textbf{center}}^{*}$ & $\textbf{PM}_{\textbf{atom}}^{*}$ \\
\midrule[\thick pt]
CDVAE & $10.37_{\pm 0.82}$ & $14.63_{\pm 1.10}$ \\
DiffCSP & $22.93_{\pm 2.66}$ & $29.98_{\pm 2.91}$ \\
AssembleFlow & $3.26_{\pm 0.06}$ & $4.96_{\pm 0.03}$ \\
SinkFast & $3.35_{\pm 0.11}$ & $4.95_{\pm 0.06}$ \\
\bottomrule[\thick pt]
\end{tabular}
}
\label{tab:cdvae_big}
\end{table*}

We present in Table \ref{tab:cdvae_best} for each model the best predictions based on minimal Packing Matching (PM) score, and in Tables \ref{tab:cdvae_5th} and \ref{tab:cdvae_10th} the 5th and 10th percentiles, respectively. However, due to CDVAE's long training and very slow inference time, we compute its performance on 120 test samples. To ensure a fair comparison, we evaluate all models on this shared subset, which we refer to as the CDVAE subset. We observe from these experiments that CDVAE and DiffCSP can perform extremely well on very few structures. However, their effectiveness quickly decreases across the dataset. This suggests that while these models have potential, they require further adaptation to be competitive on this task. In our view, adapting such models meaningfully goes beyond a quick out-of-the-box comparison. Nonetheless, they represent promising directions and could enrich the set of baselines on COD-Cluster17 in future dedicated studies or reviews.

\setlength{\tabcolsep}{0.8em}
\begin{table*}[htbp!]
\caption{Single best structure performance in \AA ($\downarrow$) of our proposed SinkFast and AssembleFlow rigid-body methods against inorganic crystal structure prediction models CDVAE and DiffCSP on COD_Cluster17 - 5k test set : filtered on the CDVAE subset.}
\centering
\scalebox{0.97}{
\begin{tabular}{lcc}
\toprule[\thick pt]
\textbf{Method} & $\textbf{PM}_{\textbf{center}}^{*}$ & $\textbf{PM}_{\textbf{atom}}^{*}$ \\
\midrule[\thick pt]
CDVAE & $1.19$ & $2.57$ \\
DiffCSP & $0.99$ & $4.61$ \\
AssembleFlow & $2.04$ & $3.03$ \\
SinkFast & $2.06$ & $2.73$ \\
\bottomrule[\thick pt]
\end{tabular}
}
\label{tab:cdvae_best}
\end{table*}

\setlength{\tabcolsep}{0.8em}
\begin{table*}[htbp!]
\caption{5th percentile performance in \AA ($\downarrow$) of our proposed SinkFast and AssembleFlow rigid-body methods against inorganic crystal structure prediction models CDVAE and DiffCSP on COD_Cluster17 - 5k test set : filtered on the CDVAE subset.}
\centering
\scalebox{0.97}{
\begin{tabular}{lcc}
\toprule[\thick pt]
\textbf{Method} & $\textbf{PM}_{\textbf{center}}^{*}$ & $\textbf{PM}_{\textbf{atom}}^{*}$ \\
\midrule[\thick pt]
CDVAE & $1.91$ & $3.21$ \\
DiffCSP & $6.61$ & $11.08$ \\
AssembleFlow & $2.67$ & $3.86$ \\
SinkFast & $2.66$ & $3.83$ \\
\bottomrule[\thick pt]
\end{tabular}
}
\label{tab:cdvae_5th}
\end{table*}

\setlength{\tabcolsep}{0.8em}
\begin{table*}[htbp!]
\caption{1st quantile performance in \AA ($\downarrow$) of our proposed SinkFast and AssembleFlow rigid-body methods against inorganic crystal structure prediction models CDVAE and DiffCSP on COD_Cluster17 - 5k test set : filtered on the CDVAE subset.}
\centering
\scalebox{0.97}{
\begin{tabular}{lcc}
\toprule[\thick pt]
\textbf{Method} & $\textbf{PM}_{\textbf{center}}^{*}$ & $\textbf{PM}_{\textbf{atom}}^{*}$ \\
\midrule[\thick pt]
CDVAE & $2.55$ & $4.67$ \\
DiffCSP & $9.61$ & $15.19$ \\
AssembleFlow & $2.77$ & $4.43$ \\
SinkFast & $2.84$ & $4.23$ \\
\bottomrule[\thick pt]
\end{tabular}
}
\label{tab:cdvae_10th}
\end{table*}

\subsection{Dependence to the correctness of the conformation}
\label{app:rdkit}
To evaluate our model’s dependency on the correctness of the initial molecular conformations, and to support the rigid molecule formulation of the initial packing probelm, we conducted the following experiment. For each molecule in the COD-Cluster17-5k test set, we extracted the corresponding SMILES representation of the ASU molecule and generated five stable conformations using RDKit \citep{greg_landrum_2025_16439048}, using EmbedMolecule followed by UFFOptimizeMolecule functions. Each generated conformation is then passed through our model to predict the packed molecular positions.

To assess the quality of RDKit-generated conformtations, we computed the 
symmetry-corrected RMSD values between RDKit-generated conformation and crystallographic structures using the spyrmsd algorithm \citep{spyrmsd2020} from RDKit \citep{greg_landrum_2025_16439048} and present the results in Figure \ref{fig:symmRMSD}. 
We can see that about 25\% of the generated conformations are sufficiently close to the crystallographic ones (within 2\AA~ RMSD) and the median RMSD is below 4\AA. This experiment supports
the rigid-body approximation in our model.
We also computed Packing Matching (PM) between each RDKit sampled molecule conformation and its corresponding COD-Cluster17 conformation. On average, PM was 3.27 Å with a standard deviation of 2.19 Å and a median of 3.11 Å. 
Due to RDKit failures on 170 of the 500 test set structures caused by issues such as improper valences or atom count mismatches--typically to experimentally invisible hydrogens--our analysis focuses on a subset of 330 molecules, referred to as the RDKit subset.

\begin{figure}[!htbp]
    \centering
      \includegraphics[width=1\textwidth] {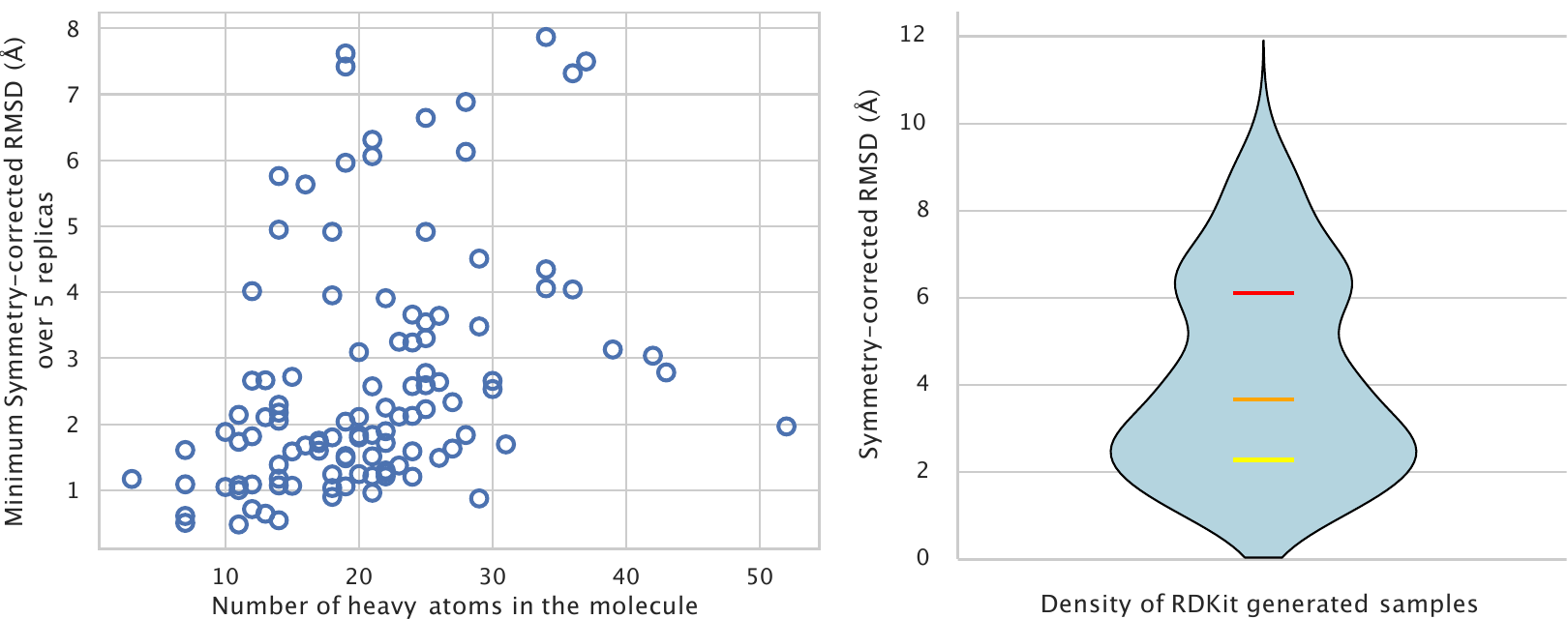}
    \caption{
Left:
    Distribution of minimum symmetry-corrected RMSD values (\AA) over 5 RDKit conformations with respect to the number of heavy atoms in the ASU molecule.
    Symmetry-corrected RMSD values were computed between RDKit-generated conformations and crystallgoraphic structures with spyrmsd \citep{spyrmsd2020}. 
Right: 
Distribution of symmetry-corrected RMSD values between RDKit-generated conformations and crystallgoraphic structures.
The yellow bar indicates the first quartile, the orange one the median and the red one the last quartile.
    }
    \label{fig:symmRMSD}
\end{figure}

The results are presented in Tables \ref{tab:rdkit_mean}, \ref{tab:rdkit_min} and \ref{tab:rdkit_max} under the RDKit column. First, we compare performance on RDKit-generated versus crystallographic conformations for both SinkFast and AssembleFlow. In terms of center-of-mass alignment ($\text{PM}_{\text{center}}$), the methods perform comparably across the two types of input. However, the performance are slightly hindered in the atom-to-atom comparison. This shows that conformations are not well represented in our model. Second, comparing Table \ref{tab:rdkit_min} to Table \ref{tab:rdkit_max} we observe that both methods perform much better on crystallographic structures from which we generate five different conformations that are close to the crystallographic ones. 
This confirms the importance of initial conformational accuracy.
However, we suspect a correlation between the size of the rigid molecule and how close are conformations generated by RDKit. The good performance of the model could also be explained through this aspect.

Our conclusion is that while the models get a sense of how important initial conformations are, the learned representations are independent to the molecular conformations. We therefore believe that future models should be trained end-to-end, jointly learning conformation and crystal structure prediction. This represents a promising direction for advancing research in this very complex domain. We believe our study helps to identify key challenges and can serve as a foundation for future work in organic crystal structure prediction.

\setlength{\tabcolsep}{0.8em}
\begin{table*}[htbp!]
\caption{Performance in \AA ($\downarrow$) of our proposed SinkFast and AssembleFlow methods on both crystallographic and RDKit generated conformations on COD-Cluster17-5k test set : filtered on the RDKit subset.}
\centering
\scalebox{0.97}{
\begin{tabular}{lccc}
\toprule[\thick pt]
\textbf{Method} & \textbf{RDKit} & $\textbf{PM}_{\textbf{center}}^{*}$ & $\textbf{PM}_{\textbf{atom}}^{*}$ \\
\midrule[\thick pt]
AssembleFlow & & $3.54_{\pm 0.01}$ & $5.44_{\pm 0.00}$ \\
AssembleFlow & \checkmark & $3.58_{\pm 0.00}$ & $5.59_{\pm 0.08}$ \\
SinkFast & & $3.59_{\pm 0.13}$ & $5.41_{\pm 0.08}$ \\
SinkFast & \checkmark & $3.55_{\pm 0.13}$ & $5.53_{\pm 0.15}$ \\
\bottomrule[\thick pt]
\end{tabular}
}
\label{tab:rdkit_mean}
\end{table*}

\setlength{\tabcolsep}{0.8em}
\begin{table*}[htbp!]
\caption{Performance in \AA ($\downarrow$) of our proposed SinkFast and AssembleFlow methods on both crystallographic and RDKit generated conformations on COD-Cluster17-5k test set : filtered on the RDKit subset with the \textbf{lowest packing matching distance} to original ones.}
\centering
\scalebox{0.97}{
\begin{tabular}{lccc}
\toprule[\thick pt]
\textbf{Method} & \textbf{RDKit} & $\textbf{PM}_{\textbf{center}}^{*}$ & $\textbf{PM}_{\textbf{atom}}^{*}$ \\
\midrule[\thick pt]
AssembleFlow & & $3.27_{\pm 0.01}$ & $4.92_{\pm 0.03}$ \\
AssembleFlow & \checkmark & $3.27_{\pm 0.03}$ & $4.90_{\pm 0.03}$ \\
SinkFast & & $3.28_{\pm 0.13}$ & $4.88_{\pm 0.11}$ \\
SinkFast & \checkmark & $3.18_{\pm 0.11}$ & $4.81_{\pm 0.12}$ \\
\bottomrule[\thick pt]
\end{tabular}
}
\label{tab:rdkit_min}
\end{table*}

\setlength{\tabcolsep}{0.8em}
\begin{table*}[htbp!]
\caption{Performance in \AA ($\downarrow$) of our proposed SinkFast and AssembleFlow methods on both crystallographic and RDKit generated conformations on COD-Cluster17-5k test set : filtered on the RDKit subset with the \textbf{highest packing matching distance} to original ones.}
\centering
\scalebox{0.97}{
\begin{tabular}{lccc}
\toprule[\thick pt]
\textbf{Method} & \textbf{RDKit} & $\textbf{PM}_{\textbf{center}}^{*}$ & $\textbf{PM}_{\textbf{atom}}^{*}$ \\
\midrule[\thick pt]
AssembleFlow & & $3.80_{\pm 0.03}$ & $5.95_{\pm 0.08}$ \\
AssembleFlow & \checkmark & $3.89_{\pm 0.01}$ & $6.27_{\pm 0.09}$ \\
SinkFast & & $3.88_{\pm 0.11}$ & $5.92_{\pm 0.00}$ \\
SinkFast & \checkmark & $3.92_{\pm 0.12}$ & $6.25_{\pm 0.14}$ \\
\bottomrule[\thick pt]
\end{tabular}
}
\label{tab:rdkit_max}
\end{table*}

\section{Visualizations}
\label{app:viz}

Figure \ref{fig:viz} shows the packing of three assemblies randomly picked from the test set. We visualize all atoms as van der Waals (vdW) spheres. We took the standard vdW radii for chemical elements, colored using JMol colors and ray-traced the scenes with PyMol. 
The image does not demonstrate common patterns, only certain \textit{packing} similarities. 
One can conclude on the generally poor reconstruction obtained from the two compared algorithms.
Indeed, the method and the problem formulation do not allow to generalize well enough to be applied and used at large scale.

\begin{figure}[!h]
\centering
\includegraphics[width=.9\textwidth]{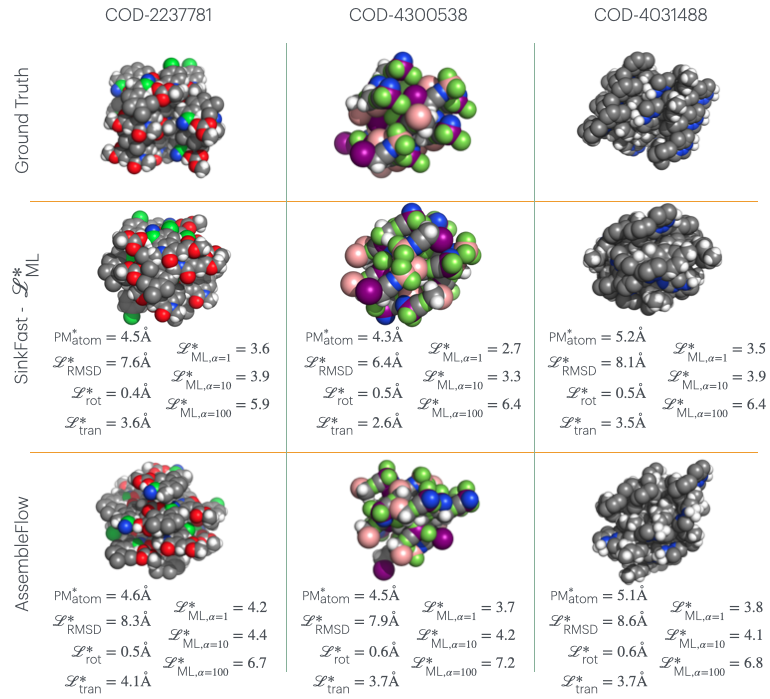}
\caption{
Visualization of our SinkFast-$\mathcal{L}^{*}_{\text{ML}}$ prediction against ground truth and AssembleFlow method on 3 examples randomly picked from the test set. Scores of each prediction are reported with $\text{PM}_{\text{atom}}^{*}$, $\mathcal{L}_{\text{RMSD}}^*$,  $\mathcal{L}_{\text{tran}}^*$ the translational error, $\mathcal{L}_{\text{rot}}^*$ the rotational error and 3 $\mathcal{L}_{\text{ML}}^*$ errors with different values of the $\alpha$ parameter. Atoms are colored using the JMol color convention and shown using PyMol molecular visualization system \citep{PyMOL}.}
\label{fig:viz}
\end{figure}

\end{document}